\documentclass{article}

\usepackage{microtype}
\usepackage{graphicx}
\usepackage{subcaption}
\usepackage{booktabs} 
\usepackage{colortbl}
\usepackage{xcolor}
 \usepackage{multirow} 

\usepackage{hyperref}

\usepackage[preprint]{icml2026}

\usepackage{amsmath}
\usepackage{amssymb}
\usepackage{mathtools}
\usepackage{amsthm}
\usepackage{enumitem}

\usepackage{algorithm}
\usepackage{algorithmic}

\usepackage[capitalize,noabbrev]{cleveref}

\theoremstyle{plain}
\newtheorem{theorem}{Theorem}[section]
\newtheorem{proposition}[theorem]{Proposition}
\newtheorem{lemma}[theorem]{Lemma}

\theoremstyle{definition}

\newtheorem{assumption}[theorem]{Assumption}
\theoremstyle{remark}

\usepackage[textsize=tiny]{todonotes}

\icmltitlerunning{IRIS: Implicit Reward-Guided Internal Sifting
for Mitigating Multimodal Hallucination}

\begin{document}

\twocolumn[
  \icmltitle{IRIS: Implicit Reward-Guided Internal Sifting \texorpdfstring{\\}{ }
    for Mitigating Multimodal Hallucination}



\icmlsetsymbol{equal}{*}
\icmlsetsymbol{corr}{*}
   \begin{icmlauthorlist}
    \icmlauthor{Yuanshuai Li}{wu}
    \icmlauthor{Yuping Yan}{wu}
    \icmlauthor{Jirui Han}{wu}
    \icmlauthor{Fei Ming}{wu}
    \icmlauthor{Lingjuan Lv}{sony}
    \icmlauthor{Yaochu Jin}{wu,corr}
  \end{icmlauthorlist}

  \icmlaffiliation{wu}{Department of Artificial Intelligence, Westlake University, Hangzhou, China}
  \icmlaffiliation{sony}{Sony Research, Sony}

  \icmlcorrespondingauthor{Yaochu Jin}{jinyaochu@westlake.edu.cn}

  \icmlkeywords{Multimodal Hallucination, DPO, ICML}

  \vskip 0.3in
]



\printAffiliationsAndNotice{}  

\begin{abstract}
Hallucination remains a fundamental challenge for Multimodal Large Language Models (MLLMs). While Direct Preference Optimization (DPO) is a key alignment framework, existing approaches often rely heavily on costly external evaluators for scoring or rewriting, incurring off-policy learnability gaps and discretization loss. Due to the lack of access to internal states, such feedback overlooks the fine-grained conflicts between different modalities that lead to hallucinations during generation.
To address this issue, we propose \textbf{IRIS} (\textbf{I}mplicit \textbf{R}eward-Guided \textbf{I}nternal \textbf{S}ifting), which leverages continuous implicit rewards in the native log-probability space to preserve full information density and capture internal modal competition. This on-policy paradigm eliminates learnability gaps by utilizing self-generated preference pairs. By sifting these pairs based on multimodal implicit rewards, IRIS ensures that optimization is driven by signals that directly resolve modal conflicts.
Extensive experiments demonstrate that IRIS achieves highly competitive performance on key hallucination benchmarks using only 5.7k samples, without requiring any external feedback during preference alignment. These results confirm that IRIS provides an efficient and principled paradigm for mitigating MLLM hallucinations.

\end{abstract}

\section{Introduction}
\label{sec:intro}

Multimodal Large Language Models (MLLMs)~\citep{achiam2023gpt,liu2023visual} integrate pretrained visual encoders with Large Language Models (LLMs) to achieve strong performance on vision-language tasks. However, these models often suffer from hallucinations, where the generated text contradicts the provided visual evidence~\citep{liu2024survey}.
Fundamentally, hallucinations arise from an imbalance between modalities during the generation process. MLLMs exhibit a strong dependence on statistical language priors acquired from large-scale textual pretraining~\citep{leng2025the}. These priors can dominate the influence of visual signals during generation, leading the model to prioritize linguistically plausible responses that are insufficiently grounded in visual evidence~\cite{leng2024mitigating}. Consequently, the model fails to faithfully ground its generation in visual inputs, especially when visual information contradicts common linguistic patterns.

\begin{figure}[t]
    \centering
    \includegraphics[width=\linewidth]{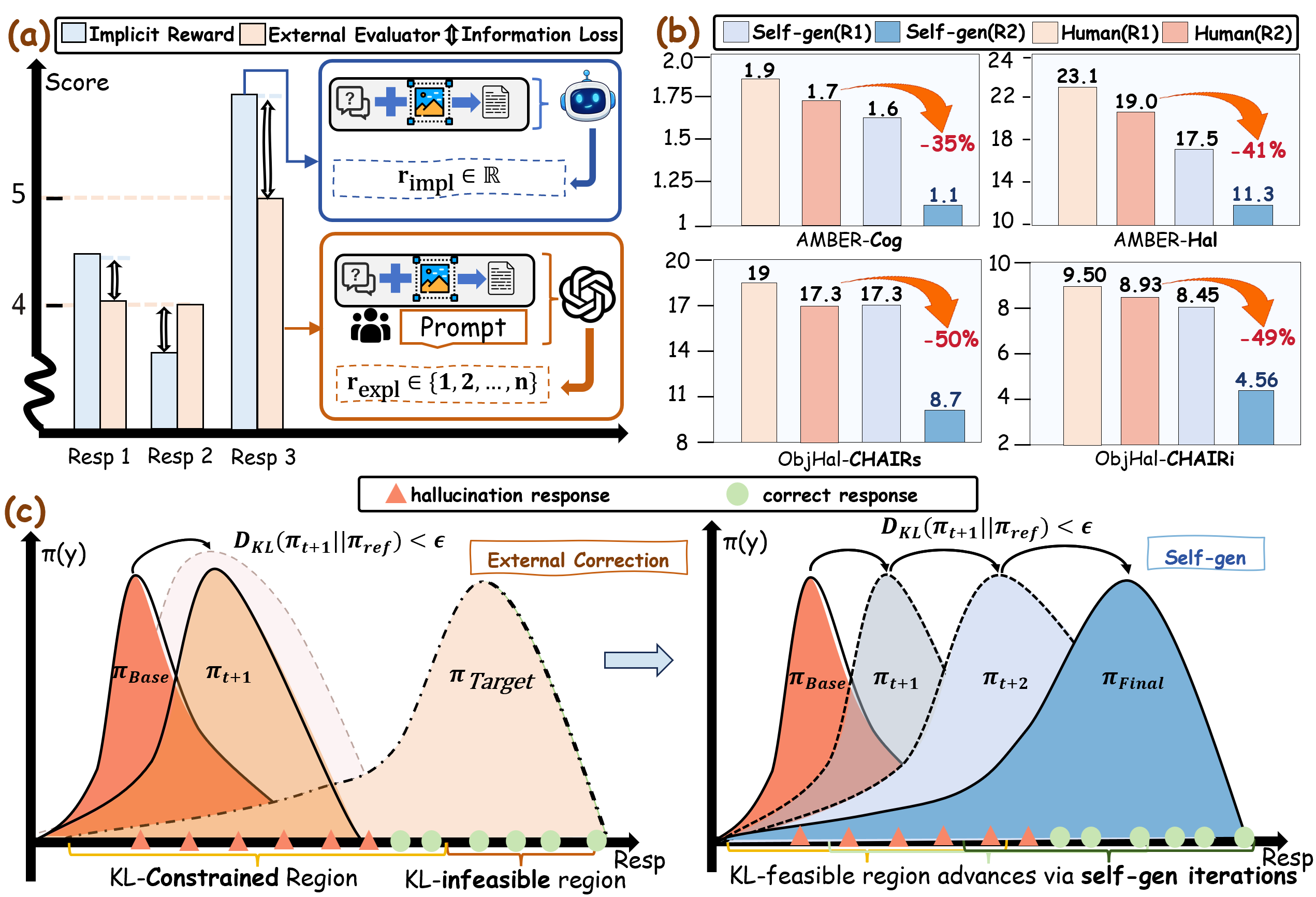} 
\caption{
\textbf{Overview of IRIS.} 
\textbf{(a)} Comparison between external discrete rewards and IRIS implicit rewards. 
\textbf{(b)} Empirical hallucination reduction achieved by self-generated preference optimization compared to human-annotated data under the same experimental setting. 
\textbf{(c)} Conceptual illustration of policy evolution under KL constraints: external correction fails to cross the KL-infeasible region, while IRIS progressively advances the KL-feasible region via self-generated iterations.}
    \label{fig:fig1}
    \vskip -0.2in
\end{figure}

To better align MLLMs with visual evidence, Direct Preference Optimization (DPO)~\cite{rafailov2023direct} offers a rigorous objective for preference alignment by directly optimizing the policy on preference pairs without explicit reward modeling. This objective enables stable optimization under KL-divergence constraints~\cite{kullback1951information} by establishing a mapping between the reward function and the optimal policy. Despite its widespread adoption, the efficacy of DPO in mitigating multimodal hallucinations is highly dependent on whether preference signals can capture the degree of visual grounding in the model’s own generative process.

This leads to a fundamental question: which preference signals can reliably quantify the degree of visual grounding during the generation process?
Existing approaches predominantly utilize external evaluators, such as GPT-4V, to provide discrete scores or corrective feedback~\cite{yu2024rlaif,yang2025mitigating,liu2025mitigating,li2025mitigatingvisualhallucinationssemantic}. However, these signals primarily assess output semantics and fail to characterize the specific internal mechanisms responsible for hallucinations along the model's own generative trajectories, leading to two key limitations.
\textbf{First,} external evaluators induce information loss through discretization. They compress the model's continuous probability distribution into a restricted set of labels. As illustrated in Fig.~\ref{fig:fig1}a, discrete external rewards assign identical scores to semantically distinct responses,
collapsing fine-grained preference differences that are essential for accurate visual grounding~\cite{wang2025improving}.
\textbf{Second,} external preference signals introduce a structural distributional discrepancy that hinders optimization. The reverse KL-divergence constraint in the DPO objective confines policy refinement to the support of the reference distribution. When preferred responses originate from disjoint off-policy distributions, they receive vanishing probability under the reference policy, so the corresponding log-ratio terms contribute almost no effective gradient and the update weights approach zero, rendering the feedback unlearnable~\cite{guo2024direct}, as illustrated by the target lying beyond the KL-constrained region in Fig.~\ref{fig:fig1}c (left).

In contrast to external evaluators, DPO leverages an \textit{implicit reward} that is defined from the model’s own policy~\cite{rafailov2023direct}. It is the log-likelihood ratio between the current and reference policies, and it provides a continuous signal in log-probability space. Compared with discrete feedback, it retains fine-grained preference differences that discretization would discard. Furthermore, as an on-policy signal, it avoids distribution shift. This ensures stable learning under KL constraints and helps identify when language priors override visual evidence. After an SFT warm-up for visual consistency~\cite{wang2025implicit}, the implicit reward becomes a reliable signal for constructing and optimizing preference pairs. This supports iterative on-policy refinement that improves visual grounding and reduces hallucinations.

Inspired by these insights, we introduce \textbf{IRIS} (\textbf{I}mplicit \textbf{R}eward-Guided \textbf{I}nternal \textbf{S}ifting), a multimodal preference alignment framework that leverages intrinsic implicit rewards as the primary alignment signal. 
By eliminating the dependency on external evaluators, IRIS enables the model to autonomously refine its policy using its native implicit rewards (Fig.~\ref{fig:fig1}a). 
The framework operates in two stages. First, a preliminary SFT phase is conducted for value calibration, anchoring the model’s latent distribution to visual consistency. 
Building on this foundation, the model generates candidate responses under its native policy and utilizes its own implicit rewards to construct targeted on-policy preference pairs via the proposed Rectified Visual Guidance (RVG) scoring. 
These pairs are subsequently optimized via multimodal DPO in an iterative refinement cycle, ensuring that the alignment remains grounded in the model’s intrinsic generative distribution (Fig.~\ref{fig:fig1}c).

Our primary contributions are summarized as follows:
\begin{itemize}[noitemsep, topsep=0pt, parsep=0pt, partopsep=0pt]
    \item We identify the limitations of discrete external feedback in multimodal DPO, and show that implicit rewards provide a continuous signal better suited for mitigating hallucinations.
    \item We propose \textbf{IRIS}, an efficient and principled paradigm that leverages Rectified Visual Guidance (RVG) scoring to sift on-policy preference pairs, ensuring the alignment is grounded in the model's native distribution.
    \item With only 5.7k samples and no external feedback during alignment, IRIS achieves strong and competitive performance across multiple benchmarks, matching or outperforming baselines trained with substantially larger datasets and external evaluators on key hallucination metrics.

\end{itemize}


\section{Related Work}
\label{sec:related}

\subsection{Hallucination Mitigation in MLLMs}
\label{sec:related_hallucination}

Research on mitigating hallucinations in MLLMs has evolved from inference-time decoding strategies~\citep{leng2024mitigatingVCD,chen2024halc} toward training-time preference alignment. Recently, DPO-based approaches have become the prevailing paradigm for enhancing visual grounding, primarily due to their superior stability and computational efficiency compared to traditional Reinforcement Learning from Human Feedback (RLHF) frameworks~\citep{yu2024rlhf} that rely on complex and often unstable optimization procedures like PPO~\citep{schulman2017proximal}.

The efficacy of DPO alignment depends on the quality of preference pairs, which provide the essential supervision to distinguish grounded responses from hallucinations. Early research focused on ranking-based automated exploration. These methods establish heuristic rules to construct preference pairs, employing heuristic metrics such as cross-modal similarity~\citep{ouali2024clip}, model scaling priors~\citep{zhang2024automated}, or visual input perturbations~\citep{pi2024strengthening} to estimate response quality.

To achieve higher alignment precision, the focus has shifted toward expert-led feedback. This progression has moved from fine-grained human annotations~\citep{yu2024rlhf} to leveraging proprietary models like GPT-4V~\citep{yang2025mitigating,liu2025mitigating}, and more recently, to utilizing powerful open-source models~\citep{yu2024rlaif,liu2025mitigating} as external evaluators for hallucination detection and rewriting.

Despite these differences, these approaches rely on an external supervision paradigm and typically utilize off-policy data. Consequently, they are limited by the capabilities of the evaluators and fail to address the internal causes of hallucinations during the model's own generation process.

\subsection{Self-Alignment via DPO Implicit Rewards}
\label{sec:related_self_alignment}

Although DPO and its variants are widely adopted for their simplicity, their offline nature can induce distribution shift, limiting policy improvement and potentially leading to overfitting~\citep{guo2024direct}. Prior work suggests that incorporating on-policy sampling to provide dynamic feedback significantly enhances alignment stability and performance~\citep{tajwar2024preference}. Consequently, mining and filtering self-generated samples for self-alignment has emerged as a key strategy to overcome the inherent limitations of offline DPO.

Theoretical evidence supports this internal evaluation approach. \citet{rafailov2024r} showed that DPO-trained models implicitly define a dense reward function at the token level. Recent research further confirms that models have the potential to evaluate themselves. For instance, it is found in \citet{li2025generalist} that models possess internal rewards for self-evaluation, while authors in \citet{wang2025implicit} showed that SFT helps calibrate these reward signals. Based on these findings, textual alignment methods such as DICE~\citep{chen2024bootstrapping} and SeRA~\citep{ko2025sera} have been proposed to leverage implicit reward signals for general quality improvement through sample bootstrapping and filtering.

However, the potential of implicit rewards for multimodal hallucination mitigation remains unexplored. We argue that this signal is naturally suited for this task because it directly reflects the internal competition between visual evidence and language priors. This allows us to detect hallucinations within the model's own probability space, which is not possible with external evaluators that only observe final outputs.

\section{Preliminaries}
\label{sec:preliminaries}

We formalize the problem of multimodal hallucination mitigation within the framework of preference optimization. Let $v$ denote the visual input (image), $x$ the textual instruction, and $y = (y_1, \dots, y_T)$ the generated response sequence.
\paragraph{Supervised Fine-Tuning (SFT)}

SFT is a widely adopted technique to adapt pre-trained MLLMs to downstream tasks. Given a dataset $\mathcal{D}_{\text{SFT}} = \{(v, x, y)\}$ comprising visual inputs $v$, textual prompts $x$, and corresponding ground-truth responses $y$, the training objective is to maximize the negative likelihood of the target response in an auto-regressive manner:
\begin{equation}
    \mathcal{L}_{\text{SFT}}(\pi_\theta) = -\mathbb{E}_{(v, x, y)} \left[ \sum_{t=1}^{|y|} \log \pi_\theta(y_t \mid v, x, y_{<t}) \right],
\end{equation}
where $y_t$ denotes the $t$-th token in the target sequence, $y_{<t}$ represents the sequence of preceding tokens, and $\pi_\theta(y_t \mid v, x, y_{<t})$ is the conditional probability predicted by the model.

\paragraph{Direct Preference Optimization (DPO).}

DPO optimizes a policy model using paired preference data $\mathcal{D}_{\text{pref}} = \{(v, x, y_w, y_l)\}$, where $y_w$ is preferred over $y_l$ given the same input $(v, x)$. Under the Bradley-Terry (BT) model assumption, the preference probability is determined by a latent reward function $r^*$:
\begin{equation}
    P(y_w \succ y_l \mid v, x) = \sigma(r^*(v, x, y_w) - r^*(v, x, y_l)),
\end{equation}
where $\sigma$ denotes the sigmoid function. DPO derives a closed-form mapping between the optimal reward function $r^*$ and the optimal policy $\pi^*$. Specifically, the implicit reward is expressed as:
\begin{equation}
    r^*(v, x, y) = \beta \log \frac{\pi^*(y \mid v, x)}{\pi_{\text{ref}}(y \mid v, x)} + Z(v, x),
    \label{eq:implicit_reward}
\end{equation}
where $\beta$ controls the deviation from the reference policy and $Z(v, x)$ is a partition function. Based on this formulation, the policy $\pi_\theta$ is optimized by minimizing the following objective:
\begin{equation}
\begin{aligned}
    \mathcal{L}_{\text{DPO}}(\pi_\theta) = - \mathbb{E}_{(v, x, y_w, y_l)} \Bigg[ \log \sigma \Bigg( 
    &\beta \log \frac{\pi_\theta(y_w \mid v, x)}{\pi_{\text{ref}}(y_w \mid v, x)} \\
    - \, &\beta \log \frac{\pi_\theta(y_l \mid v, x)}{\pi_{\text{ref}}(y_l \mid v, x)} \Bigg) \Bigg].
\end{aligned}
\label{eq:dpo_loss}
\end{equation}
This formulation unifies reward modeling and policy optimization into a single objective, eliminating the need for an explicit reward model typically required in reinforcement learning.

\begin{figure}[t]
    \centering
    \includegraphics[width=\linewidth]{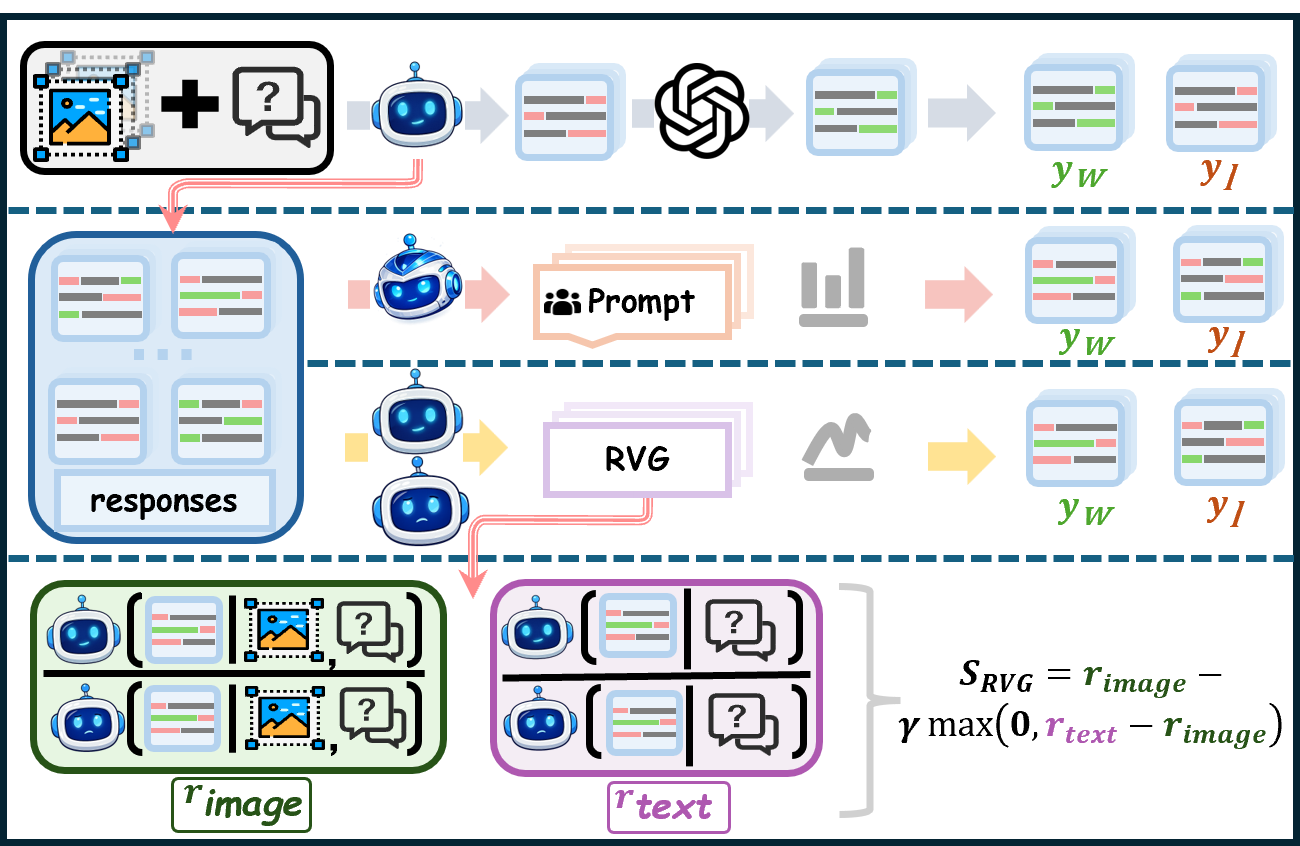} 
\caption{\textbf{Comparison of Preference Construction Pipelines.} 
\textbf{Top:} Feedback from proprietary models (e.g., GPT-4). 
\textbf{Middle:} Prompt-based scoring using large open-source models. 
\textbf{Bottom:} \texttt{IRIS} (Ours), which leverages intrinsic implicit rewards and Rectified Visual Guidance (RVG) to sift on-policy preference pairs without external evaluators.}
    \label{fig:fig2}
     \vskip -0.1in
\end{figure}

\section{Methodology}
\label{sec:method}

\subsection{Theoretical Motivation}

\paragraph{Learnability of Noisy Self-Generated Preferences.}
A potential concern is that self-generated candidates $(y_w,y_l)$ may be imperfect and even share hallucinated content.
Preference optimization, however, depends on \emph{relative} comparisons rather than absolute correctness.
Under the delta learning view~\citep{geng2025delta}, learning can still progress as long as the induced preference direction is correct more often than not.

For a generic pairwise objective, the gradient admits a difference form:
\begin{equation}
\begin{aligned}
\nabla_{\theta}\mathcal{L}_{\text{pref}}
&=
-\,w(\cdot)\Big(
\nabla_{\theta}\log \pi_{\theta}(y_w\mid v,x) \\
&\qquad\qquad
-
\nabla_{\theta}\log \pi_{\theta}(y_l\mid v,x)
\Big),
\end{aligned}
\label{eq:pairwise_grad_delta}
\end{equation}
where $w(\cdot)>0$ is a scalar weight determined by the loss.
Thus, the update increases the log-likelihood gap
$\log \pi_{\theta}(y_w\mid v,x)-\log \pi_{\theta}(y_l\mid v,x)$
in expectation over sampled preference pairs, pushing the policy toward responses that are preferred under the same context.
A formal derivation is provided in Appendix~\ref{app:delta_learning}.

\paragraph{Implicit Reward Calibration via SFT.}
Self-alignment with implicit rewards requires the model to rank grounded responses above hallucinated ones.
We posit that this capability is established during the SFT warm-up, which calibrates the model’s scoring toward task-relevant visual grounding.
Recent analyses relate maximum-likelihood training to implicit reward learning under KL-regularized distribution matching~\citep{wang2025implicit}.

In particular, under an implicit-reward formulation, the soft-optimal policy admits
\begin{equation}
\log \pi^{*}(a\mid s)
=
\log \pi_{\text{ref}}(a\mid s)
+\frac{1}{\beta}Q^{*}(s,a)
- V^{*}(s),
\label{eq:logpi_q}
\end{equation}
and hence
$\log \frac{\pi^{*}(a\mid s)}{\pi_{\text{ref}}(a\mid s)}
=
\frac{1}{\beta}Q^{*}(s,a) - V^{*}(s)$.
Motivated by this connection, we use the DPO implicit reward
$r(y\mid s)=\beta\log\frac{\pi(y\mid s)}{\pi_{\text{ref}}(y\mid s)}$
as an intrinsic scoring signal.

We treat the generation context as the state: $s=(v,x)$ for image-conditioned generation and $s=(v_{\emptyset},x)$ for text-only generation.
Accordingly, we define $r_{\text{image}}=r(y\mid v,x)$ and $r_{\text{text}}=r(y\mid v_{\emptyset},x)$.
Comparing $r_{\text{image}}$ and $r_{\text{text}}$ for the same candidate $y$ separates visual evidence from language-only priors; in particular, $r_{\text{text}}>r_{\text{image}}$ indicates that $y$ is more strongly supported under the text-only context, which motivates the rectification in preference construction.

\subsection{Warm-up and On-policy Self-Generation}
We start from a base model $\pi_{\theta_{\text{base}}}$ and perform an SFT warm-up on $\mathcal{D}_{\text{SFT}}$ to obtain a visually grounded instruction-following policy $\pi_{\theta_0}$. which calibrates the initial implicit reward landscape. 

IRIS then proceeds in iterative preference rounds indexed by $r=1, 2, \ldots$ as illustrated in Figure~\ref{fig:fig2}. In round $r$, for each $(v,x)\in\mathcal{D}_{\text{SFT}}$, we sample $K$ candidate responses from the previous-round policy $\pi_{\theta_{r-1}}$:
\begin{equation}
\{y^{(k)}\}_{k=1}^{K} \sim \pi_{\theta_{r-1}}(\cdot \mid v,x).
\end{equation}
These self-generated responses are subsequently scored and sifted to form on-policy preference pairs.

\subsection{Implicit Reward Scoring and Preference Pair Construction}
To construct on-policy preference pairs, we score each sampled response $y$ using length-normalized log-likelihood ratios between the sampling policy $\pi_{\theta_{r-1}}$ and the reference policy $\pi_{\text{ref}}^{(r-1)}$, taken as the preceding policy $\pi_{\theta_{r-2}}$. These ratios are utilized exclusively for scoring and sifting samples into preference pairs.

The image-conditioned implicit reward, which captures grounded alignment under the visual context $v$, is defined as:
\begin{equation}
r_{\text{image}}^{(r)}(v,x,y)
=
\log \frac{\pi_{\theta_{r-1}}(y\mid v,x)}
{\pi_{\text{ref}}^{(r-1)}(y\mid v,x)}.
\label{eq:r_image}
\end{equation}
The text-only implicit reward, which isolates language priors by omitting the visual context via $v_{\emptyset}$, is defined as:
\begin{equation}
r_{\text{text}}^{(r)}(x,y)
=
\log \frac{\pi_{\theta_{r-1}}(y\mid v_{\emptyset},x)}
{\pi_{\text{ref}}^{(r-1)}(y\mid v_{\emptyset},x)}.
\label{eq:r_text}
\end{equation}
To eliminate the influence of response length on the implicit rewards, the log-likelihood is normalized by the token count $|y|$ as $\log \pi(y\mid\cdot) = \frac{1}{|y|} \sum_{t=1}^{|y|} \log \pi(y_t \mid \cdot, y_{<t})$. This normalization prevents the scoring process from being biased by sequence length, ensuring that the selection is solely determined by the grounding quality of the candidates.

We then define a comprehensive grounding-aware score using \textbf{Rectified Visual Guidance (RVG)}:
\begin{equation}
\begin{aligned}
S^{(r)}(v,x,y) &= r_{\text{image}}^{(r)}(v,x,y) \\
&\quad -\gamma \max\!\left(0,\, r_{\text{text}}^{(r)}(x,y)-r_{\text{image}}^{(r)}(v,x,y)\right),
\end{aligned}
\label{eq:rvg}
\end{equation}
where $\gamma \ge 0$ is the rectification strength. The penalty term is activated only when $r_{\text{text}}^{(r)} > r_{\text{image}}^{(r)}$, which corresponds to cases where the model assigns a higher relative likelihood to a response in the absence of visual evidence. This behavior is attributed to an over-reliance on language priors and is identified as a primary source of multimodal hallucinations \citep{xie2024v}. By down-weighting such candidates, RVG enforces a grounding constraint that prioritizes responses derived from actual visual input.

Given the $K$ candidates $\{y^{(k)}\}_{k=1}^{K}$ sampled for each input $(v,x)$, we identify the responses with the maximum and minimum $S^{(r)}$ scores to construct an on-policy preference pair with the highest contrast. This sifting process ensures that the optimization is guided by the most distinct supervisory signals available within the sampled set:
\vskip -0.15in
\begin{equation}
\begin{aligned}
y_w &= \arg\max_{k \in \{1,\ldots,K\}} S^{(r)}(v,x,y^{(k)}), \\
y_l &= \arg\min_{k \in \{1,\ldots,K\}} S^{(r)}(v,x,y^{(k)}).
\end{aligned}
\label{eq:sifting}
\end{equation}
\vskip -0.15in
In round $r$, we designate $y_w$ as the preferred response and $y_l$ as the rejected counterpart for preference optimization.

To ensure high-quality preference data, we apply a lightweight filtering stage with degeneration screening and length-aware filtering to reduce noisy supervision and length bias.
For samples that would otherwise be discarded, we recover them by anchoring the chosen response to the SFT reference from $\mathcal{D}_{\text{SFT}}$.
This improves training stability across rounds while keeping the pipeline self-contained without external evaluators.
We provide the full filtering rules and anchoring details in Appendix~\ref{sec:additional_impl}.

\subsection{Grounded Preference Learning Objectives}

Based on the preference pairs identified through the scoring and sifting process, we update the policy to improve multimodal grounding and alignment. In round $r$, we optimize parameters $\theta$ initialized from $\theta_{r-1}$ and denote the resulting policy as $\pi_{\theta_r}$; for notational simplicity, we write $\pi_\theta$ for the policy being optimized. We minimize a composite objective that combines Conditional Textual Preference, Conditional Visual Preference, and Anchored Regularization. In each round $r$, the reference policy $\pi_{\text{ref}}$ is a frozen copy of the preceding policy $\pi_{\theta_{r-1}}$.

\paragraph{Conditional Textual Preference}
The component $\mathcal{L}_{\text{ctp}}$ adopts the standard DPO objective on $\mathcal{D}^{(r)}$, increasing the preference margin of $y_w$ over $y_l$ under the multimodal context $(v,x)$ relative to the frozen reference policy $\pi_{\text{ref}}$:
\begin{equation}
\begin{aligned}
\mathcal{L}_{\text{ctp}} = -\mathbb{E}_{(v,x,y_w,y_l) \sim \mathcal{D}^{(r)}} \big[ & \log \sigma \big( \beta \log \frac{\pi_{\theta}(y_w \mid v,x)}{\pi_{\text{ref}}(y_w \mid v,x)} \\
& - \beta \log \frac{\pi_{\theta}(y_l \mid v,x)}{\pi_{\text{ref}}(y_l \mid v,x)} \big) \big].
\end{aligned}
\label{eq:l_ctp}
\end{equation}
This term serves as the core preference-learning signal on the sifted on-policy pairs.

\paragraph{Conditional Visual Preference}
The component $\mathcal{L}_{\text{cvp}}$ encourages visual dependence by preferring the same response $y_w$ under the original image $v$ over a perturbed counterpart $\tilde{v}$ where the evidence supporting $y_w$ is suppressed:
\begin{equation}
\begin{aligned}
\mathcal{L}_{\text{cvp}} = -\mathbb{E}_{(v,x,y_w) \sim \mathcal{D}^{(r)}} \big[ & \log \sigma \big( \beta \log \frac{\pi_{\theta}(y_w \mid v,x)}{\pi_{\text{ref}}(y_w \mid v,x)} \\
& - \beta \log \frac{\pi_{\theta}(y_w \mid \tilde{v},x)}{\pi_{\text{ref}}(y_w \mid \tilde{v},x)} \big) \big].
\end{aligned}
\label{eq:l_cvp}
\end{equation}
Here, $\tilde{v}$ is generated by applying a perturbation operator $T(\cdot)$ to $v$.
This term discourages high relative reward for $y_w$ under visually uninformative inputs, thereby promoting grounded preference learning.

\paragraph{Anchored Regularization}
The component $\mathcal{L}_{\text{anchor}}$ stabilizes training by preventing the likelihood of preferred responses from drifting downward. Since DPO-style objectives enforce only a \emph{relative} margin between $(y_w,y_l)$, the preference gap can increase even if the likelihood of $y_w$ decreases. We therefore introduce an anchored term~\citep{wang2024mdpo, yang2025mitigating, liu2025mitigating} that keeps the reference-relative reward of $y_w$ above a soft margin $\delta$:
\begin{equation}
\begin{aligned}
\mathcal{L}_{\text{anchor}} = -\mathbb{E}_{(v,x,y_w) \sim \mathcal{D}^{(r)}} \big[ & \log \sigma \big( \beta \log \frac{\pi_{\theta}(y_w \mid v,x)}{\pi_{\text{ref}}(y_w \mid v,x)} \\
& - \delta \big) \big].
\end{aligned}
\label{eq:l_anchor}
\end{equation}
Here, $\delta$ specifies a soft margin on the reference-relative reward of $y_w$.

\paragraph{Total Objective}
By combining these components, the total objective is defined as the weighted sum of the grounded learning signals:
\begin{equation}
\mathcal{L}_{\text{total}} = \mathcal{L}_{\text{ctp}} + \lambda \mathcal{L}_{\text{cvp}} + \mathcal{L}_{\text{anchor}},
\label{eq:l_total}
\end{equation}

where $\lambda$ controls the strength of $\mathcal{L}_{\text{cvp}}$.

\paragraph{Iterative Alignment with Separate References.}
IRIS starts from a base model $\pi_{\theta_{\text{base}}}$ and performs an SFT warm-up to obtain $\pi_{\theta_0}$. 
For each preference round, we use two references for different purposes. 
When constructing the on-policy preference set, we score self-generated candidates using implicit reward ratios computed between two consecutive policies: in round $r$, scoring uses $\pi_{\theta_{r-1}}$ with $\pi_{\theta_{r-2}}$ as the reference for the log-ratio. For $r=1$, we use the base model $\pi_{\theta_{\text{base}}}$ as the scoring reference. During preference optimization, we initialize the trainable policy from the previous round and use a frozen copy of it as the DPO reference within the round, namely $\pi_\theta \leftarrow \pi_{\theta_{r-1}}$ and $\pi_{\text{ref}} \leftarrow \pi_{\theta_{r-1}}$. 
We then minimize Eq.~\ref{eq:l_total} on $\mathcal{D}^{(r)}$ to obtain the updated policy $\pi_{\theta_r}$, and repeat for a small number of rounds.

\section{Experiments}
\label{sec:exp}

\subsection{Experimental Setup}
\label{sec:exp_setup}

\paragraph{Implementation Details.}
\texttt{IRIS} is implemented on LLaVA-1.5 7B and 13B. The models use CLIP ViT-L/14 and Vicuna-v1.5 as backbones. We set $\gamma = 0.7$ for RVG and use $K = 5$ for on-policy sampling. The generation temperature is 0.7. SFT warm-up uses 5,700 samples from the RLHF-V dataset. For training, we set $\beta = 0.1$ and $\lambda = 1.0$ for $\mathcal{L}_{\text{cvp}}$. The 7B and 13B models use learning rates of $5 \times 10^{-7}$ and $1 \times 10^{-6}$, respectively. Each round is trained for 2 epochs. All experiments are run on 8 NVIDIA H20 GPUs. The entire pipeline operates without any external human or AI feedback. Details on constructing the rejected images $\tilde{v}$ are provided in Appendix~\ref{sec:additional_impl}.

\paragraph{Evaluation Benchmarks.}
We evaluate \texttt{IRIS} on several representative benchmarks to assess both hallucination mitigation and general capabilities. \textbf{AMBER}~\citep{wang2023amber} is a multi-dimensional generative benchmark with 1,004 images; we report object hallucination (\textbf{CHAIR}$\downarrow$), coverage (\textbf{Cover}$\uparrow$), response-level hallucination (\textbf{HalRate}$\downarrow$), and cognitive hallucination (\textbf{Cog}$\downarrow$). \textbf{MMHal-Bench}~\citep{sun2024aligning} includes 96 images across 12 categories for question answering; we follow the official rubric to report the overall quality \textbf{Score}$\uparrow$ and \textbf{HalRate}$\downarrow$ using GPT-4 evaluation. \textbf{Object-HalBench}~\citep{rohrbach2018object} consists of 300 instances for image description; we report CHAIR metrics at both the sentence (\textbf{CHAIRs}$\downarrow$) and instance levels (\textbf{CHAIRi}$\downarrow$). Finally, \textbf{LLaVA-Bench (in-the-Wild)}~\citep{liu2023visual} is used to assess general conversational ability via GPT-4-relative scores.

\vskip -0.1in
\paragraph{Baselines.}
We compare \texttt{IRIS} against a comprehensive set of recent state-of-the-art approaches for hallucination mitigation, including \textbf{LLaVA-RLHF}~\citep{sun-etal-2024-aligning}, \textbf{HALVA}~\citep{sarkar2024mitigating}, \textbf{mDPO}~\citep{wang2024mdpo}, \textbf{HA-DPO}~\citep{zhao2023beyond}, \textbf{V-DPO}~\citep{xie2024v}, \textbf{POVID}~\citep{zhou2024aligning}, \textbf{RLAIF-V}~\citep{yu2024rlaif}, \textbf{SymMPO}~\citep{liu2025mitigating}, \textbf{RLHF-V (HD)}~\citep{yu2024rlhf}, \textbf{LPOI}~\citep{zadeh2025lpoi}, and \textbf{OPA-DPO}~\citep{yang2025mitigating}.

\subsection{Main Results}
\label{sec:main_results}

Table~\ref{tab:comparison_SOTA} summarizes the main results on three representative hallucination benchmarks. We report \texttt{IRIS}-R2 as our final model. Overall, \texttt{IRIS}-R2 improves grounding-oriented performance on both 7B and 13B backbones. On AMBER, it reduces hallucination-related metrics such as CHAIR and HalRate compared to the vanilla \texttt{LLaVA-1.5} models, while keeping coverage at a similar level. On MMHal-Bench, \texttt{IRIS}-R2 also improves over the base models, but the gains are smaller than those on object-level hallucination metrics; this may be partly because MMHal-Bench emphasizes compositional visual reasoning (e.g., counting and relations), while our method focuses on improving visual grounding to reduce hallucinated objects and attributes. Notably, on Object HalBench, \texttt{IRIS}-R2 achieves \textbf{8.66} CHAIRs, showing strong improvements in fine-grained object grounding. Furthermore, we observe that \texttt{IRIS}-R2 consistently outperforms \texttt{IRIS}-R1 across model scales, validating the effectiveness of iterative self-alignment.

We further compare \texttt{IRIS} with two recent strong baselines, RLAIF-V~\citep{yu2024rlaif} and OPA-DPO~\citep{yang2025mitigating}.
Compared to RLAIF-V, \texttt{IRIS} is competitive on AMBER and achieves strong object-level grounding on Object HalBench.
Compared to OPA-DPO, which relies on GPT-4V feedback, \texttt{IRIS} remains competitive on AMBER while achieving clear gains on Object HalBench and attaining higher coverage.
Crucially, regarding efficiency, while RLAIF-V also employs open-source models, it relies on heavy prompt-based labeler scoring. In contrast, \texttt{IRIS} leverages implicit rewards, resulting in an approximately $40\times$ reduction in curation cost (Appendix~\ref{app:computational_cost}).

\definecolor{rowgray}{gray}{0.95}
\definecolor{oursbg}{RGB}{255, 240, 230}
\definecolor{myteal}{RGB}{0, 128, 128}
\definecolor{myred}{RGB}{178, 34, 34}
\providecommand{\imp}[1]{\textcolor{myteal}{\tiny{\textbf{(#1)}}}}
\providecommand{\dec}[1]{\textcolor{myred}{\tiny{\textbf{(#1)}}}}

\begin{table*}[t]
\caption{
Comparative assessment of \texttt{IRIS} against state-of-the-art baselines on multimodal hallucination benchmarks.
Boldface indicates the best result.
Values in parentheses denote the relative change with respect to the corresponding vanilla \texttt{LLaVA-1.5} backbone.
}
\label{tab:comparison_SOTA}
\vskip 0.15in
\begin{center}
\begin{small}
\begin{sc}
\renewcommand{\arraystretch}{1.2}
\setlength{\tabcolsep}{2.5pt}
\resizebox{\textwidth}{!}{
\begin{tabular}{l c c | c c c c | c c | c c }
\toprule
\multirow{3}{*}{\textbf{Algorithm}} & \multirow{3}{*}{\textbf{Data Size}} & \multirow{3}{*}{\textbf{Feedback}} & \multicolumn{4}{c|}{\textbf{AMBER}} & \multicolumn{2}{c|}{\textbf{MMHal}} & \multicolumn{2}{c}{\textbf{Object Hal}} \\
& & & \textbf{CHAIR}$\downarrow$ & \textbf{Cover}$\uparrow$ & \textbf{HalRate}$\downarrow$ & \textbf{Cog}$\downarrow$ & \textbf{Score}$\uparrow$ & \textbf{HalRate}$\downarrow$ & \textbf{CHAIRs}$\downarrow$ & \textbf{CHAIRi}$\downarrow$ \\
\midrule
\rowcolor{rowgray} GPT-4V ~\citep{yang2023dawn} & \(\times\) &  \(\times\)& 4.6 & 67.1 & 30.7 & 2.6 & 3.49 & 0.28 & 13.6 & 7.3 \\
\rowcolor{rowgray} Qwen-VL-Chat ~\citep{Qwen-VL} &  \(\times\) &  \(\times\)& 6.6 & 53.2 & 33.2 & 31.0 & 2.89 & 0.60 & 36.0 & 21.3 \\
\rowcolor{rowgray} Silkic ~\citep{li2025mini} &  \(\times\)&  \(\times\)& 5.4 & 55.8 & 29.0 & 2.0 & 3.01 & 0.41 & 25.3 & 13.9 \\
\rowcolor{rowgray} InstructBLIP ~\citep{dai2023instructblip} &  \(\times\)&  \(\times\)& 5.4 & 55.2 & 29.0 & 38.2 & 2.21 & 1.35 & 25.9 & 14.3 \\
\rowcolor{rowgray} MiniGemini ~\citep{li2025mini} &  \(\times\)&  \(\times\)& 5.8 & 55.8 & 29.0 & 3.08 & 0.38 & 0.38 & 14.5 & 8.0 \\

\midrule
\rowcolor{rowgray} \textbf{LLaVA-1.5-7B ~\citep{liu2023visual}} & & & 8.8 & 50.1 & 40.4 & 4.7 & 2.18 & 0.59 & 54.70 & 26.5 \\
+LLaVA-RLHF ~\citep{sun-etal-2024-aligning} & 122k & RLHF & 9.7 & \textbf{53.2} & 46.6 & 5.3 & 1.88 & 0.71 & 58.00 & 15.61 \\
+HALVA ~\citep{sarkar2024mitigating} & 21.5k & GPT-4V & 6.6 & 53.0 & 32.2 & 3.4 & 2.25 & 0.54 & 41.40 & 11.70 \\
+mDPO ~\citep{wang2024mdpo} & 10k & GPT-4V & 4.4 & 52.4 & 24.5 & 2.4 & 2.39 & 0.54 & 35.70 & 9.80 \\

+HA-DPO ~\citep{zhao2023beyond} & 6k & GPT-4 & 7.8 & 52.1 & 35.6 & 4.2 & 1.89 & 0.65 & 54.00 & 14.45 \\
+V-DPO ~\citep{xie2024v} & 10K & GPT-3.5 & 6.6 & 49.1 & 30.8 & 3.1 & - & - & - & - \\
+POVID ~\citep{zhou2024aligning} & 17k & GPT-4V & 7.4 & 51.3 & 34.3 & 3.9 & 2.08 & 0.60 & 50.67 & 15.28 \\
+RLAIF-V ~\citep{yu2024rlaif} & 16k & LLaVA-Next & 3.0 & 50.4 & 16.2 & 1.0 & \textbf{3.00} & \textbf{0.38} & 16.00 & \textbf{3.70} \\
+SymMPO ~\cite{liu2025mitigating} & 21K & Deepseek-V3 & 5.2 & 49.5 & 25.4 & 3.0 & 2.63 & 0.51 & 20.4 & 10.3 \\
+OPA-DPO ~\citep{yang2025mitigating} & 4.8k & GPT-4V &\textbf{2.2} &47.9 &11.6 &\textbf{0.9} &2.83 &0.45 &13.00 &4.25 \\
+LPOI ~\cite{zadeh2025lpoi} & 10K & GPT-4V & 4.3 & 51.9 & 26.4 & 2.0 & 2.40 & 0.59 & 24.3 & 14.6 \\

\rowcolor{oursbg} \textbf{+IRIS-R1 (Ours)} & 5.7k & implicit reward & 3.8\imp{-5.0} & 51.9\imp{+1.8} & 17.5\imp{-22.9} & 1.6\imp{-3.1} & 2.34\imp{+0.16} & 0.50\imp{-0.09} & 17.3\imp{-37.4} & 8.45\imp{-18.05} \\
\rowcolor{oursbg} \textbf{+IRIS-R2 (Ours)} & 5.7k & implicit reward & 2.4\imp{-6.4} & 51.1\imp{+1.0} & \textbf{11.3}\imp{-29.1} & 1.1\imp{-3.6} & 2.42\imp{+0.24} & 0.50\imp{-0.09} & \textbf{8.66}\imp{-46.04} & 4.56\imp{-21.94} \\
\midrule
\rowcolor{rowgray} \textbf{LLaVA-1.5-13B ~\citep{liu2023visual}} & & & 8.8 & 50.3 & 37.2 & 4.3 & 2.31 & 0.55 & 49.3 & 23.9 \\
+LLaVA-RLHF ~\citep{sun-etal-2024-aligning} & 122k & RLHF & 7.7 & 52.3 & 38.6 & 4.0 & 2.27 & 0.64 & 44.67 & 11.83 \\

+mDPO ~\citep{wang2024mdpo} & 10K & GPT-4V & 4.6 & 52.6 & 25.0 & 2.0 & 2.50 & 0.57 & 33.3 & 16.6 \\
+RLHF-V (HD) ~\citep{yu2024rlhf} & 1.4k & Human & 6.3 & 46.1 & 25.1 & 2.1 & 2.81 & 0.49 & - & - \\

+HALVA  ~\citep{sarkar2024mitigating} & 21.5k & GPT-4V & 6.4 & 52.6 & 30.4 & 3.2 & 2.58 & 0.45 & 45.40 & 12.80 \\
+SymMPO ~\citep{liu2025mitigating} & 21k & Deepseek-V3 & 4.8 & 52.8 & 25.1 & 2.1 & 2.85 & 0.48 & 18.3 & 10.0 \\
+LPOI ~\cite{zadeh2025lpoi}& 10K & GPT-4V & 3.9 & 52.9 & 22.3 & 1.8 & 2.54 & 0.57 & 24.3 & 11.7 \\
\rowcolor{oursbg} \textbf{+IRIS-R1 (Ours)} & 5.7k & implicit reward & 3.7\imp{-5.1} & \textbf{53.7}\imp{+3.4} & 20.2\imp{-17.0} & 1.9\imp{-2.4} & 2.82\imp{+0.51} & 0.42\imp{-0.14} & 18.6\imp{-30.7} & 9.1\imp{-14.8} \\
\rowcolor{oursbg} \textbf{+IRIS-R2 (Ours)} & 5.7k & implicit reward & \textbf{3.5}\imp{-5.3} & 52.2\imp{+1.9} & \textbf{18}\imp{-19.2} & \textbf{1.7}\imp{-2.6} & \textbf{2.86}\imp{+0.55} & \textbf{0.41}\imp{-0.14} & \textbf{10}\imp{-39.3} & \textbf{5.49}\imp{-18.41} \\
\bottomrule
\end{tabular}}
\end{sc}
\end{small}
\end{center}
\vskip -0.1in
\end{table*}

\subsection{Ablation Studies}

\paragraph{Effect of Objective Components.}
Table~\ref{tab:loss_ablation} isolates the effect of each objective component. The results show that conditional visual preference, denoted by $\mathcal{L}_{\text{cvp}}$, yields the largest improvement on CHAIR-based hallucination metrics. Conditional visual preference is the main signal for grounding, while anchored regularization, $\mathcal{L}_{\text{anchor}}$, helps stabilize training and prevent capability degradation. Removing anchored regularization leads to a drop in general capability below the vanilla base model, as further evidenced in Appendix~\ref{tab:general_capability_ablation}.

\begin{table}[htp!]
    \caption{Ablation study on objective components. Starting from $\mathcal{L}_{\text{ctp}}$, we add $\mathcal{L}_{\text{cvp}}$ and $\mathcal{L}_{\text{anchor}}$. The full objective achieves the best overall results, while $\mathcal{L}_{\text{cvp}}$ provides the primary gains.}
    \label{tab:loss_ablation}
    \vskip -0.10in
    \begin{center}
    \begin{small}
    \begin{sc}
    \setlength{\tabcolsep}{3.6pt}
    \renewcommand{\arraystretch}{1.15}
    \resizebox{\linewidth}{!}{
    \begin{tabular}{c c c | c c c | c c}
        \toprule
        \multicolumn{3}{c|}{Components} & \multicolumn{3}{c|}{AMBER} & \multicolumn{2}{c}{Object Hal} \\
        \cmidrule(lr){1-3}\cmidrule(lr){4-6}\cmidrule(lr){7-8}
        $\mathcal{L}_{\text{ctp}}$ & $\mathcal{L}_{\text{cvp}}$ & $\mathcal{L}_{\text{anchor}}$ 
        & CHAIR$\downarrow$ & HalRate$\downarrow$ & Cog$\downarrow$ & CHAIRs$\downarrow$ & CHAIRi$\downarrow$ \\
        \midrule
        $\checkmark$ & $\times$      & $\times$      & 5.8 & 30.7 & 2.0 & 18.0 & 7.98 \\
        $\checkmark$ & $\times$      & $\checkmark$  & 4.9 & 25.4 & 2.2 & 19.3 & 9.61 \\
        $\checkmark$ & $\checkmark$  & $\times$      & 2.9 & \textbf{10.4} & \textbf{0.8} & 10.2 & 4.87 \\
        \rowcolor{rowgray}
        $\checkmark$ & $\checkmark$  & $\checkmark$  & \textbf{2.4} & 11.3 & 1.1 &8.66 & \textbf{4.56} \\
        \bottomrule
    \end{tabular}
    }
    \end{sc}
    \end{small}
    \end{center}
    \vskip -0.3in
\end{table}

\paragraph{Impact of Training Paradigms.}
Table~\ref{tab:paradigm_ablation_factorized} studies two factors in our training pipeline: \textbf{SFT warm-up} and on-policy \textbf{self-generation}. We first find that the SFT warm-up gives a clearly better starting point: with the same training round, models with SFT warm-up consistently show lower hallucination metrics than those trained without it. This suggests that SFT helps the policy learn a more grounded response pattern before preference optimization.

More importantly, on-policy self-generation brings the largest improvements across rounds. When we replace static human pairs with self-generated pairs, the model reduces hallucinations faster and reaches a much better final result, especially on Object HalBench. In contrast, training with static human pairs yields more limited gains, which diminish as training proceeds. Overall, the results suggest that SFT warm-up improves the initial model state, whereas on-policy self-generation enables sustained improvement across successive rounds.

\begin{table}[!t]
    \caption{Factorized ablation on training paradigms. We evaluate the impact of \textbf{SFT warm-up} and on-policy \textbf{Self-gen} compared to off-policy human data (RLHF-V).}
  \label{tab:paradigm_ablation_factorized}
  \vskip -0.05in
  \centering
  \scriptsize
  \setlength{\tabcolsep}{3.8pt}
  \renewcommand{\arraystretch}{1.12}
  \resizebox{\linewidth}{!}{
  \begin{tabular}{c c c | c c c | c c}
    \toprule
    \multicolumn{2}{c}{\textbf{Factors}} & \multirow{2}{*}{\textbf{Round}} &
    \multicolumn{3}{c|}{\textbf{AMBER}} & \multicolumn{2}{c}{\textbf{Object Hal}} \\
    \cmidrule(lr){1-2}\cmidrule(lr){4-6}\cmidrule(lr){7-8}
    \textbf{SFT warm-up} & \textbf{Self-gen} & &
    CHAIR$\downarrow$ & HalRate$\downarrow$ & Cog$\downarrow$ &
    CHAIRs$\downarrow$ & CHAIRi$\downarrow$ \\
    \midrule

    $\times$ & $\times$ & R0 & 5.9 & 29.8 & 3.3 & 43.3 & 21.3 \\
    $\times$ & $\times$ & R1 & 4.9 & 25.3 & 2.8 & 31.0 & 15.3 \\
    \rowcolor{rowgray} $\times$ & $\times$ & R2 & 3.8 & 20.2 & 2.1 & 27.3 & 13.0 \\
    \midrule

    $\checkmark$ & $\times$ & R0 & 5.3 & 25.5 & 2.5 & 24.0 & 13.0 \\
    $\checkmark$ & $\times$ & R1 & 4.6 & 23.1 & 1.9 & 19.0 & 9.5 \\
    \rowcolor{rowgray} $\checkmark$ & $\times$ & R2 & 3.7 & 19.0 & 1.7 & 17.3 & 8.9 \\
    \midrule

    $\times$ & $\checkmark$ & R0 & 5.9 & 29.8 & 3.3 & 43.3 & 21.3 \\
    $\times$ & $\checkmark$ & R1 & 3.6 & 19.8 & 2.0 & 23.6 & 11.0 \\
    \rowcolor{rowgray} $\times$ & $\checkmark$ & R2 & 2.5 & 14.4 & 1.5 & 18.3 & 9.22 \\
    \midrule

    $\checkmark$ & $\checkmark$ & R0 & 5.3 & 25.5 & 2.5 & 24.0 & 13.0 \\
    $\checkmark$ & $\checkmark$ & R1 & 3.8 & 17.5 & 1.6 & 17.3 & 8.45 \\
    \rowcolor{rowgray} $\checkmark$ & $\checkmark$ & R2 & \textbf{2.4} & \textbf{11.3} & \textbf{1.1} & \textbf{8.6} & \textbf{4.56} \\
    \bottomrule
  \end{tabular}}
  \vskip -0.2in
\end{table}

\paragraph{Effectiveness of Scoring Signals.}
Table~\ref{tab:reward_ablation} compares alternative scoring signals for sifting preference pairs. In Round 1, the unimodal scores $r_{\text{text}}$ and $r_{\text{image}}$ yield comparable results and neither consistently dominates across metrics. In Round 2, $r_{\text{image}}$ improves upon $r_{\text{text}}$ on most hallucination measures, suggesting that visual conditioning becomes increasingly helpful as the policy is refined.

RVG performs best across all reported metrics. On Object HalBench, RVG reduces CHAIRs to 8.6 in Round 2, compared to 14.1 when sifting with $r_{\text{image}}$. This pattern is consistent with RVG suppressing candidates that remain highly preferred under text-only priors but are weakly supported by the image, thereby producing more informative preference pairs for subsequent optimization.

\begin{table}[!ht]
    \caption{Ablation on scoring signals for sifting. While using the image-only reward ($r_{\text{image}}$) is more effective than the text-only signal ($r_{\text{text}}$), our proposed RVG achieves the best performance.}
    \label{tab:reward_ablation}
    \vskip -0.4in
    \begin{center}
    \begin{small}
    \begin{sc}
    \setlength{\tabcolsep}{3.2pt}
    \renewcommand{\arraystretch}{1.10}
    \resizebox{\linewidth}{!}{
    \begin{tabular}{l c | c c c | c c}
        \toprule
        Scoring & Round & \multicolumn{3}{c|}{AMBER} & \multicolumn{2}{c}{Object Hal} \\
         &  & CHAIR$\downarrow$ & HalRate$\downarrow$ & Cog$\downarrow$ & CHAIRs$\downarrow$ & CHAIRi$\downarrow$ \\
        \midrule
        $r_{\text{text}}$ (Text-only)  & R1 & 3.6 & 17.9 & 1.8 & 17.9 & 9.22 \\
        \rowcolor{rowgray}
        $r_{\text{text}}$ (Text-only)  & R2 & 3.2 & 15.6 & 1.5 & 15.2 & 7.85 \\
        \midrule
        $r_{\text{image}}$ (Image-only) & R1 & 3.8 & 17.8 & 1.8 & 18.3 & 9.30 \\
        \rowcolor{rowgray}
        $r_{\text{image}}$ (Image-only) & R2 & 3.0 & 14.5 & 1.4 & 14.1 & 7.12 \\
        \midrule
        RVG (Ours) & R1 & 3.8 & 17.5 & 1.6 & 17.3 & 8.45 \\
        \rowcolor{rowgray}
        RVG (Ours) & R2 & \textbf{2.4} & \textbf{11.3} & \textbf{1.1} & \textbf{8.6} & \textbf{4.56} \\
        \bottomrule
    \end{tabular}
    }
    \end{sc}
    \end{small}
    \end{center}
    \vskip -0.2in
\end{table}

\section{Hyperparameter Sensitivity}
\label{sec:hparam_sensitivity}

We examine the sensitivity of two key hyperparameters in \texttt{IRIS}: the rectification strength $\gamma$ in RVG, which controls the penalty for unsupported language priors, and the weight $\lambda$, which balances the conditional visual preference term.
Figures~\ref{fig:hparam_sensitivity} and \ref{fig:hparam_lambda} summarize the trends across a wide range of values.
Overall, the performance is robust, with the optimal results achieved around $\gamma=0.7$ and $\lambda=1.0$. We adopt these as default settings in all subsequent experiments.
Detailed numerical results and comprehensive sensitivity analyses are presented in Appendix~\ref{app:hparam_sensitivity}.

\begin{figure}[!htp]
    \centering
    \includegraphics[width=\linewidth]{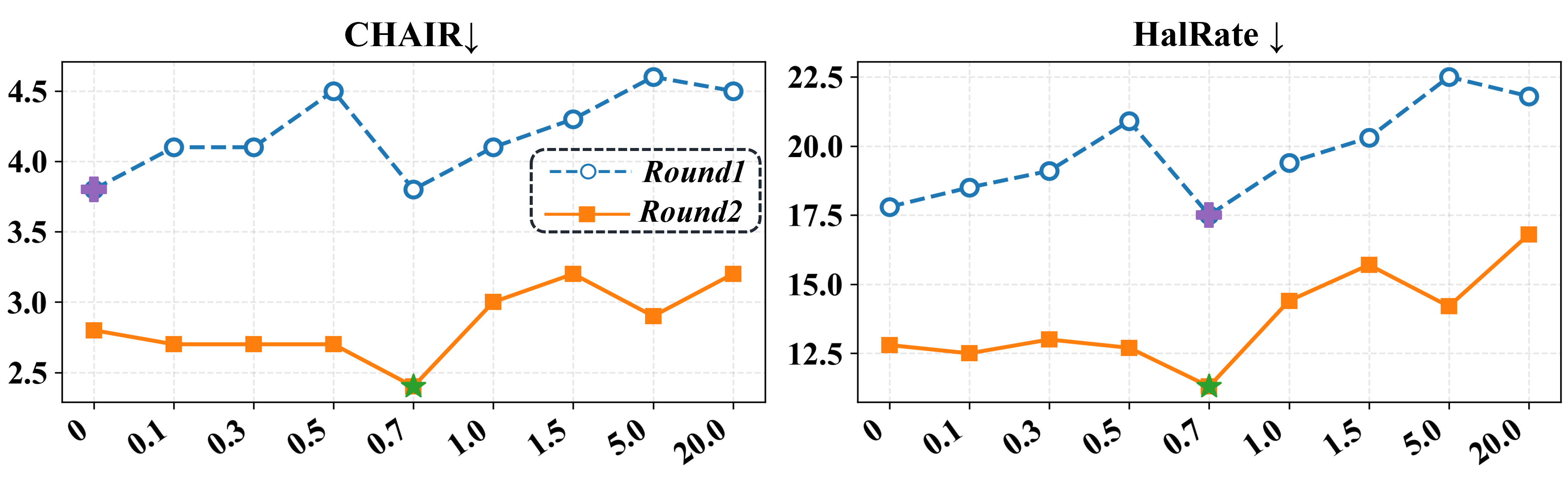}
    \caption{\textbf{Effect of Rectification Strength $\gamma$.} Sensitivity of hallucination metrics to $\gamma$ across two iterative rounds. The star indicates the optimal value at $\gamma=0.7$.}
    \vskip -0.1in
    \label{fig:hparam_sensitivity}
\end{figure}

\begin{figure}[!htp]
    \centering
    \includegraphics[width=\linewidth]{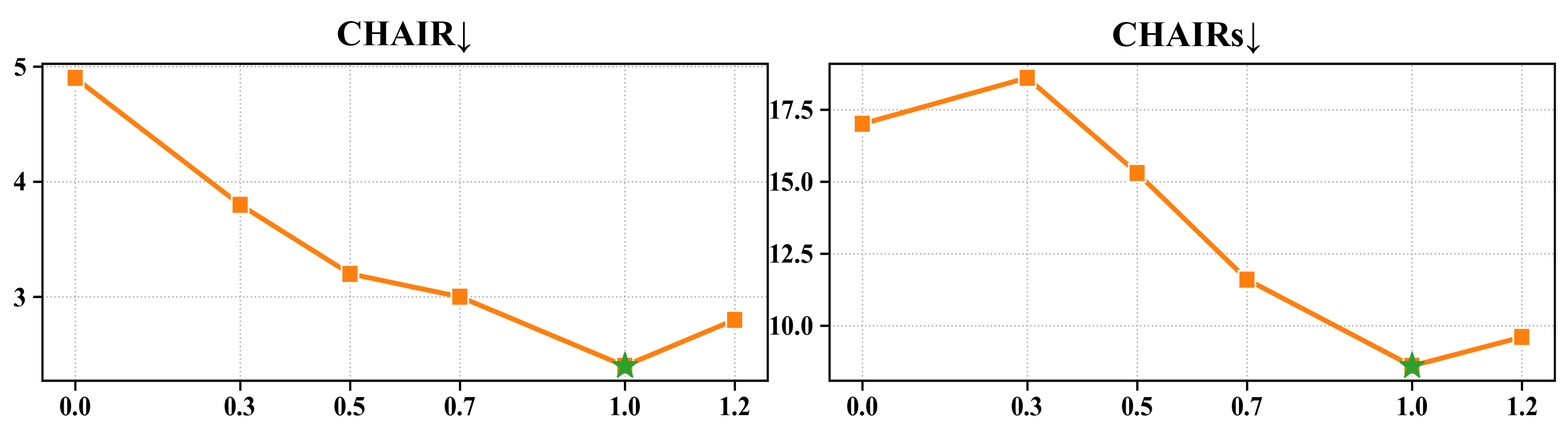}
    \caption{\textbf{Effect of Visual Preference Weight $\lambda$.} Sensitivity of hallucination metrics to the weight $\lambda$ in the final round. The star indicates the optimal value at $\lambda=1.0$.} 
    \vskip -0.2in
    \label{fig:hparam_lambda}
\end{figure}

\section{Further Analysis}
\label{sec:further_analysis}

\paragraph{Data Efficiency.}
Table~\ref{tab:data_efficiency} examines how \texttt{IRIS} changes with the amount of on-policy preference data. With only 1k pairs, \texttt{IRIS} already improves hallucination metrics on both AMBER and Object HalBench, compared to the model at the start of preference training. Increasing the budget to 3k pairs leads to much larger gains, and using 5.7k pairs gives the best or tied-best results on most metrics. Across all three budgets, Round 2 consistently outperforms Round 1, showing that iterative on-policy refinement remains effective without requiring tens of thousands of preference pairs.

\begin{table}[!ht]
    \caption{Data efficiency analysis. IRIS achieves strong performance with limited training data, demonstrating sample efficiency.}
    \label{tab:data_efficiency}
    \vskip 0.15in
    \begin{center}
    \begin{small}
    \begin{sc}
    \resizebox{\linewidth}{!}{
    \begin{tabular}{l c | c c c | c c}
        \toprule
        \multirow{2}{*}{Data} & \multirow{2}{*}{Round} & \multicolumn{3}{c|}{AMBER} & \multicolumn{2}{c}{Object Hal} \\
         & & CHAIR$\downarrow$ & HalRate$\downarrow$ & Cog$\downarrow$ & CHAIRs$\downarrow$ & CHAIRi$\downarrow$ \\
        \midrule
        1k & R1 & 5.2 & 25.5 & 2.6 & 25.0 & 13.0 \\
        \rowcolor{rowgray}1k & R2 & 4.9 & 23.8 & 2.2 & 24.3 & 12.3 \\
        \midrule
        3k & R1 & 4.8 & 22.6 & 2.1 & 20.6 & 10.7 \\
        \rowcolor{rowgray}3k & R2 & 2.9 & 13.3 & \textbf{1.1} & 11.0 & 5.54 \\
        \midrule
        5.7k & R1 & 3.8 & 17.5 & 1.6 & 17.3 & 8.45 \\
        \rowcolor{rowgray}5.7k & R2 & \textbf{2.4} & \textbf{11.3} & \textbf{1.1} & \textbf{8.6} & \textbf{4.56} \\
        \bottomrule
    \end{tabular}
    }
    \end{sc}
    \end{small}
    \end{center}
    \vskip -0.2in
\end{table}

\paragraph{Robustness to Sampling Repeat Times $K$.}
Table~\ref{tab:repeat} studies how sensitive \texttt{IRIS} is to the sampling repeat factor $K$. Performance stays similar across $K \in \{3,5,10\}$, and each setting improves from Round 1 to Round 2. The final results differ only a little, which suggests that \texttt{IRIS} does not need a large sampling budget to work well. Using more candidates can help, but the gains become smaller as $K$ increases. We set $K=5$ as a simple default that gives strong results at a reasonable cost, and the method remains robust under other choices of $K$.

\begin{table}[!ht]
    \caption{Ablation on sampling repeat times $K$ for on-policy data generation.}
    \label{tab:repeat}
    \vskip -0.7in
    \begin{center}
    \begin{small}
    \begin{sc}
    \resizebox{\linewidth}{!}{
    \begin{tabular}{l c | c c c | c c}
        \toprule
        \multirow{2}{*}{Repeat $K$} & \multirow{2}{*}{Round} & \multicolumn{3}{c|}{AMBER} & \multicolumn{2}{c}{Object Hal} \\
         & & CHAIR$\downarrow$ & HalRate$\downarrow$ & Cog$\downarrow$ & CHAIRs$\downarrow$ & CHAIRi$\downarrow$ \\
        \midrule
        $K=3$ & R1 & 3.6 & 16.7 & 1.7 & 17.6 & 9.28 \\
        \rowcolor{rowgray}$K=3$ & R2 & 2.8 & 12.1 & 1.2 & 7.66 & 4.10 \\
        \midrule
        $K=5$  & R1 & 3.8 & 17.5 & 1.6 & 17.3 & 8.45 \\
        \rowcolor{rowgray}$K=5$ & R2 & \textbf{2.4} & \textbf{11.3} & \textbf{1.1} & 8.6 & 4.56 \\
        \midrule
        $K=10$ & R1 & 3.6 & 16.3 & 1.5 & 16.3 & 7.75 \\
        \rowcolor{rowgray}$K=10$ & R2 & 2.7 & 12.2 & 1.5 & \textbf{6.0} & \textbf{3.55} \\
        
        \bottomrule
        
    \end{tabular}
    }
    \end{sc}
    \end{small}
    \end{center}
    \vskip -0.2in
\end{table}

\section{Conclusion}
\label{sec:conclusion}

We presented \texttt{IRIS}, an iterative on-policy self-alignment framework for mitigating hallucinations in MLLMs. The core of our approach is demonstrating that intrinsic implicit rewards can be effectively harnessed to identify high-quality preference signals from a model's own generative distribution, thereby eliminating the dependency on costly external evaluators or proprietary models. By incorporating RVG during the sifting process, \texttt{IRIS} successfully isolates visual evidence from language priors, enabling the model to refine its grounding through iterative refinement cycles.

Experimental results confirm that this paradigm consistently improves object-level grounding across multiple benchmarks. Our analysis further shows that \texttt{IRIS} is both sample-efficient and robust to hyperparameter choices, narrowing the performance gap to methods that rely on high-cost external feedback. Overall, by providing a practical and principled approach for internal preference mining, we believe \texttt{IRIS} offers a new and efficient perspective for mitigating hallucinations in future multimodal models.

\newpage

\nocite{langley00}

\bibliography{example_paper}

\begin{thebibliography}{44}
\providecommand{\natexlab}[1]{#1}
\providecommand{\url}[1]{\texttt{#1}}
\expandafter\ifx\csname urlstyle\endcsname\relax
  \providecommand{\doi}[1]{doi: #1}\else
  \providecommand{\doi}{doi: \begingroup \urlstyle{rm}\Url}\fi

\bibitem[Achiam et~al.(2023)Achiam, Adler, Agarwal, Ahmad, Akkaya, Aleman, Almeida, Altenschmidt, Altman, Anadkat, et~al.]{achiam2023gpt}
Achiam, J., Adler, S., Agarwal, S., Ahmad, L., Akkaya, I., Aleman, F.~L., Almeida, D., Altenschmidt, J., Altman, S., Anadkat, S., et~al.
\newblock Gpt-4 technical report.
\newblock \emph{arXiv preprint arXiv:2303.08774}, 2023.

\bibitem[Bai et~al.(2023)Bai, Bai, Yang, Wang, Tan, Wang, Lin, Zhou, and Zhou]{Qwen-VL}
Bai, J., Bai, S., Yang, S., Wang, S., Tan, S., Wang, P., Lin, J., Zhou, C., and Zhou, J.
\newblock Qwen-vl: A frontier large vision-language model with versatile abilities.
\newblock \emph{arXiv preprint arXiv:2308.12966}, 2023.

\bibitem[Chen et~al.(2024{\natexlab{a}})Chen, Liu, Du, Pang, Liu, Sinha, Varakantham, and Lin]{chen2024bootstrapping}
Chen, C., Liu, Z., Du, C., Pang, T., Liu, Q., Sinha, A., Varakantham, P., and Lin, M.
\newblock Bootstrapping language models with dpo implicit rewards.
\newblock \emph{arXiv preprint arXiv:2406.09760}, 2024{\natexlab{a}}.

\bibitem[Chen et~al.(2024{\natexlab{b}})Chen, Zhao, Luo, Yao, Li, and Zhou]{chen2024halc}
Chen, Z., Zhao, Z., Luo, H., Yao, H., Li, B., and Zhou, J.
\newblock Halc: Object hallucination reduction via adaptive focal-contrast decoding.
\newblock \emph{arXiv preprint arXiv:2403.00425}, 2024{\natexlab{b}}.

\bibitem[Dai et~al.(2023)Dai, Li, Li, Tiong, Zhao, Wang, Li, Fung, and Hoi]{dai2023instructblip}
Dai, W., Li, J., Li, D., Tiong, A., Zhao, J., Wang, W., Li, B., Fung, P.~N., and Hoi, S.
\newblock Instructblip: Towards general-purpose vision-language models with instruction tuning.
\newblock \emph{Advances in neural information processing systems}, 36:\penalty0 49250--49267, 2023.

\bibitem[Fu et~al.(2025)Fu, Fei, Shen, Hooi, Qiu, Ng, et~al.]{fu2025chip}
Fu, J., Fei, H., Shen, X., Hooi, B., Qiu, X., Ng, S.-K., et~al.
\newblock Chip: Cross-modal hierarchical direct preference optimization for multimodal llms.
\newblock In \emph{The Thirteenth International Conference on Learning Representations}, 2025.

\bibitem[Geng et~al.(2025)Geng, Ivison, Li, Sap, Li, Krishna, and Koh]{geng2025delta}
Geng, S., Ivison, H., Li, C.-L., Sap, M., Li, J., Krishna, R., and Koh, P.~W.
\newblock The delta learning hypothesis: Preference tuning on weak data can yield strong gains.
\newblock \emph{arXiv preprint arXiv:2507.06187}, 2025.

\bibitem[Guo et~al.(2024)Guo, Zhang, Liu, Liu, Khalman, Llinares, Rame, Mesnard, Zhao, Piot, et~al.]{guo2024direct}
Guo, S., Zhang, B., Liu, T., Liu, T., Khalman, M., Llinares, F., Rame, A., Mesnard, T., Zhao, Y., Piot, B., et~al.
\newblock Direct language model alignment from online ai feedback.
\newblock \emph{arXiv preprint arXiv:2402.04792}, 2024.

\bibitem[He et~al.()He, Chen, Shi, Yu, Shao, and Sheng]{hesystematic}
He, L., Chen, Z., Shi, Z., Yu, T., Shao, J., and Sheng, L.
\newblock Systematic reward gap optimization for mitigating vlm hallucinations.
\newblock In \emph{The Thirty-ninth Annual Conference on Neural Information Processing Systems}.

\bibitem[Ko et~al.(2025)Ko, Dingliwal, Ganesh, Sengupta, Bodapati, and Galstyan]{ko2025sera}
Ko, J., Dingliwal, S., Ganesh, B., Sengupta, S., Bodapati, S.~B., and Galstyan, A.
\newblock Sera: Self-reviewing and alignment of llms using implicit reward margins.
\newblock In \emph{The Thirteenth International Conference on Learning Representations}, 2025.

\bibitem[Kullback \& Leibler(1951)Kullback and Leibler]{kullback1951information}
Kullback, S. and Leibler, R.~A.
\newblock On information and sufficiency.
\newblock \emph{The annals of mathematical statistics}, 22\penalty0 (1):\penalty0 79--86, 1951.

\bibitem[Langley(2000)]{langley00}
Langley, P.
\newblock Crafting papers on machine learning.
\newblock In Langley, P. (ed.), \emph{Proceedings of the 17th International Conference on Machine Learning (ICML 2000)}, pp.\  1207--1216, Stanford, CA, 2000. Morgan Kaufmann.

\bibitem[Leng et~al.(2024{\natexlab{a}})Leng, Zhang, Chen, Li, Lu, Miao, and Bing]{leng2024mitigating}
Leng, S., Zhang, H., Chen, G., Li, X., Lu, S., Miao, C., and Bing, L.
\newblock Mitigating object hallucinations in large vision-language models through visual contrastive decoding.
\newblock In \emph{Proceedings of the IEEE/CVF Conference on Computer Vision and Pattern Recognition}, pp.\  13872--13882, 2024{\natexlab{a}}.

\bibitem[Leng et~al.(2024{\natexlab{b}})Leng, Zhang, Chen, Li, Lu, Miao, and Bing]{leng2024mitigatingVCD}
Leng, S., Zhang, H., Chen, G., Li, X., Lu, S., Miao, C., and Bing, L.
\newblock Mitigating object hallucinations in large vision-language models through visual contrastive decoding.
\newblock In \emph{Proceedings of the IEEE/CVF Conference on Computer Vision and Pattern Recognition}, pp.\  13872--13882, 2024{\natexlab{b}}.

\bibitem[Leng et~al.(2025)Leng, Xing, Cheng, Zhou, Zhang, Li, Zhao, Lu, Miao, and Bing]{leng2025the}
Leng, S., Xing, Y., Cheng, Z., Zhou, Y., Zhang, H., Li, X., Zhao, D., Lu, S., Miao, C., and Bing, L.
\newblock The curse of multi-modalities: Evaluating hallucinations of large multimodal models across language, visual, and audio.
\newblock In \emph{The Thirty-ninth Annual Conference on Neural Information Processing Systems Datasets and Benchmarks Track}, 2025.
\newblock URL \url{https://openreview.net/forum?id=G4AZhSEcrV}.

\bibitem[Li et~al.(2025{\natexlab{a}})Li, Yan, Tang, Li, Zheng, and Jin]{li2025mitigatingvisualhallucinationssemantic}
Li, Y., Yan, Y., Tang, J., Li, Y., Zheng, Z., and Jin, Y.
\newblock Mitigating visual hallucinations via semantic curriculum preference optimization in mllms, 2025{\natexlab{a}}.
\newblock URL \url{https://arxiv.org/abs/2509.24491}.

\bibitem[Li et~al.(2025{\natexlab{b}})Li, Zhang, Wang, Zhong, Chen, Chu, Liu, and Jia]{li2025mini}
Li, Y., Zhang, Y., Wang, C., Zhong, Z., Chen, Y., Chu, R., Liu, S., and Jia, J.
\newblock Mini-gemini: Mining the potential of multi-modality vision language models.
\newblock \emph{IEEE Transactions on Pattern Analysis and Machine Intelligence}, 2025{\natexlab{b}}.

\bibitem[Li et~al.(2025{\natexlab{c}})Li, Xu, Yu, Zhang, Chen, Ling, Chao, Yuan, and Zhou]{li2025generalist}
Li, Y.-C., Xu, T., Yu, Y., Zhang, X., Chen, X.-H., Ling, Z., Chao, N., Yuan, L., and Zhou, Z.-H.
\newblock Generalist reward models: Found inside large language models.
\newblock \emph{arXiv preprint arXiv:2506.23235}, 2025{\natexlab{c}}.

\bibitem[Liu et~al.(2023)Liu, Li, Wu, and Lee]{liu2023visual}
Liu, H., Li, C., Wu, Q., and Lee, Y.~J.
\newblock Visual instruction tuning.
\newblock \emph{Advances in neural information processing systems}, 36:\penalty0 34892--34916, 2023.

\bibitem[Liu et~al.(2024)Liu, Xue, Chen, Chen, Zhao, Wang, Hou, Li, and Peng]{liu2024survey}
Liu, H., Xue, W., Chen, Y., Chen, D., Zhao, X., Wang, K., Hou, L., Li, R., and Peng, W.
\newblock A survey on hallucination in large vision-language models.
\newblock \emph{arXiv preprint arXiv:2402.00253}, 2024.

\bibitem[Liu et~al.(2025)Liu, Song, Li, Wei, Zheng, Yin, and Nie]{liu2025mitigating}
Liu, W., Song, X., Li, J., Wei, Y., Zheng, N., Yin, J., and Nie, L.
\newblock Mitigating hallucination through theory-consistent symmetric multimodal preference optimization.
\newblock \emph{arXiv preprint arXiv:2506.11712}, 2025.

\bibitem[Ouali et~al.(2024)Ouali, Bulat, Martinez, and Tzimiropoulos]{ouali2024clip}
Ouali, Y., Bulat, A., Martinez, B., and Tzimiropoulos, G.
\newblock Clip-dpo: Vision-language models as a source of preference for fixing hallucinations in lvlms.
\newblock In \emph{European Conference on Computer Vision}, pp.\  395--413. Springer, 2024.

\bibitem[Pi et~al.(2024)Pi, Han, Xiong, Zhang, Liu, Pan, and Zhang]{pi2024strengthening}
Pi, R., Han, T., Xiong, W., Zhang, J., Liu, R., Pan, R., and Zhang, T.
\newblock Strengthening multimodal large language model with bootstrapped preference optimization.
\newblock In \emph{European Conference on Computer Vision}, pp.\  382--398. Springer, 2024.

\bibitem[Rafailov et~al.(2023)Rafailov, Sharma, Mitchell, Manning, Ermon, and Finn]{rafailov2023direct}
Rafailov, R., Sharma, A., Mitchell, E., Manning, C.~D., Ermon, S., and Finn, C.
\newblock Direct preference optimization: Your language model is secretly a reward model.
\newblock \emph{Advances in neural information processing systems}, 36:\penalty0 53728--53741, 2023.

\bibitem[Rafailov et~al.(2024)Rafailov, Hejna, Park, and Finn]{rafailov2024r}
Rafailov, R., Hejna, J., Park, R., and Finn, C.
\newblock From $ r $ to $ q^* $: Your language model is secretly a q-function.
\newblock \emph{arXiv preprint arXiv:2404.12358}, 2024.

\bibitem[Rohrbach et~al.(2018)Rohrbach, Hendricks, Burns, Darrell, and Saenko]{rohrbach2018object}
Rohrbach, A., Hendricks, L.~A., Burns, K., Darrell, T., and Saenko, K.
\newblock Object hallucination in image captioning.
\newblock In \emph{Proceedings of the 2018 Conference on Empirical Methods in Natural Language Processing}, pp.\  4035--4045, 2018.

\bibitem[Sarkar et~al.(2024)Sarkar, Ebrahimi, Etemad, Beirami, Ar{\i}k, and Pfister]{sarkar2024mitigating}
Sarkar, P., Ebrahimi, S., Etemad, A., Beirami, A., Ar{\i}k, S.~{\"O}., and Pfister, T.
\newblock Mitigating object hallucination in mllms via data-augmented phrase-level alignment.
\newblock \emph{arXiv preprint arXiv:2405.18654}, 2024.

\bibitem[Schulman et~al.(2017)Schulman, Wolski, Dhariwal, Radford, and Klimov]{schulman2017proximal}
Schulman, J., Wolski, F., Dhariwal, P., Radford, A., and Klimov, O.
\newblock Proximal policy optimization algorithms.
\newblock \emph{arXiv preprint arXiv:1707.06347}, 2017.

\bibitem[Sun et~al.(2024{\natexlab{a}})Sun, Shen, Cao, Liu, Li, Shen, Gan, Gui, Wang, Yang, Keutzer, and Darrell]{sun-etal-2024-aligning}
Sun, Z., Shen, S., Cao, S., Liu, H., Li, C., Shen, Y., Gan, C., Gui, L., Wang, Y.-X., Yang, Y., Keutzer, K., and Darrell, T.
\newblock Aligning large multimodal models with factually augmented {RLHF}.
\newblock In Ku, L.-W., Martins, A., and Srikumar, V. (eds.), \emph{Findings of the Association for Computational Linguistics: ACL 2024}, pp.\  13088--13110, Bangkok, Thailand, August 2024{\natexlab{a}}. Association for Computational Linguistics.
\newblock \doi{10.18653/v1/2024.findings-acl.775}.
\newblock URL \url{https://aclanthology.org/2024.findings-acl.775/}.

\bibitem[Sun et~al.(2024{\natexlab{b}})Sun, Shen, Cao, Liu, Li, Shen, Gan, Gui, Wang, Yang, et~al.]{sun2024aligning}
Sun, Z., Shen, S., Cao, S., Liu, H., Li, C., Shen, Y., Gan, C., Gui, L., Wang, Y.-X., Yang, Y., et~al.
\newblock Aligning large multimodal models with factually augmented rlhf.
\newblock In \emph{Findings of the Association for Computational Linguistics: ACL 2024}, pp.\  13088--13110, 2024{\natexlab{b}}.

\bibitem[Tajwar et~al.(2024)Tajwar, Singh, Sharma, Rafailov, Schneider, Xie, Ermon, Finn, and Kumar]{tajwar2024preference}
Tajwar, F., Singh, A., Sharma, A., Rafailov, R., Schneider, J., Xie, T., Ermon, S., Finn, C., and Kumar, A.
\newblock Preference fine-tuning of llms should leverage suboptimal, on-policy data.
\newblock \emph{arXiv preprint arXiv:2404.14367}, 2024.

\bibitem[Wang et~al.(2025{\natexlab{a}})Wang, Cheng, Peng, Bao, Li, Guo, Li, Zeng, Zhou, and Qiu]{wang2025implicit}
Wang, B., Cheng, Q., Peng, R., Bao, R., Li, P., Guo, Q., Li, L., Zeng, Z., Zhou, Y., and Qiu, X.
\newblock Implicit reward as the bridge: A unified view of sft and dpo connections.
\newblock \emph{arXiv preprint arXiv:2507.00018}, 2025{\natexlab{a}}.

\bibitem[Wang et~al.(2024)Wang, Zhou, Huang, Xu, Zhang, Poon, and Chen]{wang2024mdpo}
Wang, F., Zhou, W., Huang, J.~Y., Xu, N., Zhang, S., Poon, H., and Chen, M.
\newblock mdpo: Conditional preference optimization for multimodal large language models.
\newblock \emph{CoRR}, 2024.

\bibitem[Wang et~al.(2023)Wang, Wang, Xu, Zhang, Gu, Jia, Wang, Xu, Yan, Zhang, et~al.]{wang2023amber}
Wang, J., Wang, Y., Xu, G., Zhang, J., Gu, Y., Jia, H., Wang, J., Xu, H., Yan, M., Zhang, J., et~al.
\newblock Amber: An llm-free multi-dimensional benchmark for mllms hallucination evaluation.
\newblock \emph{arXiv preprint arXiv:2311.07397}, 2023.

\bibitem[Wang et~al.(2025{\natexlab{b}})Wang, Zhang, and Choi]{wang2025improving}
Wang, V., Zhang, M.~J., and Choi, E.
\newblock Improving llm-as-a-judge inference with the judgment distribution.
\newblock \emph{arXiv preprint arXiv:2503.03064}, 2025{\natexlab{b}}.

\bibitem[Xie et~al.(2024)Xie, Li, Xu, and Kan]{xie2024v}
Xie, Y., Li, G., Xu, X., and Kan, M.-Y.
\newblock V-dpo: Mitigating hallucination in large vision language models via vision-guided direct preference optimization.
\newblock \emph{CoRR}, 2024.

\bibitem[Yang et~al.(2023)Yang, Li, Lin, Wang, Lin, Liu, and Wang]{yang2023dawn}
Yang, Z., Li, L., Lin, K., Wang, J., Lin, C.-C., Liu, Z., and Wang, L.
\newblock The dawn of lmms: Preliminary explorations with gpt-4v (ision).
\newblock \emph{arXiv preprint arXiv:2309.17421}, 2023.

\bibitem[Yang et~al.(2025)Yang, Luo, Han, Xu, and Li]{yang2025mitigating}
Yang, Z., Luo, X., Han, D., Xu, Y., and Li, D.
\newblock Mitigating hallucinations in large vision-language models via dpo: On-policy data hold the key.
\newblock In \emph{Proceedings of the Computer Vision and Pattern Recognition Conference}, pp.\  10610--10620, 2025.

\bibitem[Yu et~al.(2024{\natexlab{a}})Yu, Yao, Zhang, He, Han, Cui, Hu, Liu, Zheng, Sun, et~al.]{yu2024rlhf}
Yu, T., Yao, Y., Zhang, H., He, T., Han, Y., Cui, G., Hu, J., Liu, Z., Zheng, H.-T., Sun, M., et~al.
\newblock Rlhf-v: Towards trustworthy mllms via behavior alignment from fine-grained correctional human feedback.
\newblock In \emph{Proceedings of the IEEE/CVF Conference on Computer Vision and Pattern Recognition}, pp.\  13807--13816, 2024{\natexlab{a}}.

\bibitem[Yu et~al.(2024{\natexlab{b}})Yu, Zhang, Yao, Dang, Chen, Lu, Cui, He, Liu, Chua, et~al.]{yu2024rlaif}
Yu, T., Zhang, H., Yao, Y., Dang, Y., Chen, D., Lu, X., Cui, G., He, T., Liu, Z., Chua, T.-S., et~al.
\newblock Rlaif-v: Aligning mllms through open-source ai feedback for super gpt-4v trustworthiness.
\newblock \emph{arXiv e-prints}, pp.\  arXiv--2405, 2024{\natexlab{b}}.

\bibitem[Zadeh et~al.(2025)Zadeh, Oh, and Kim]{zadeh2025lpoi}
Zadeh, F.~P., Oh, Y., and Kim, G.
\newblock Lpoi: Listwise preference optimization for vision language models.
\newblock \emph{arXiv preprint arXiv:2505.21061}, 2025.

\bibitem[Zhang et~al.(2024)Zhang, Wu, Lu, Song, Rong, Yao, Zhao, Liu, Feng, Wang, et~al.]{zhang2024automated}
Zhang, M., Wu, W., Lu, Y., Song, Y., Rong, K., Yao, H., Zhao, J., Liu, F., Feng, H., Wang, J., et~al.
\newblock Automated multi-level preference for mllms.
\newblock \emph{Advances in Neural Information Processing Systems}, 37:\penalty0 26171--26194, 2024.

\bibitem[Zhao et~al.(2023)Zhao, Wang, Ouyang, Dong, Wang, and He]{zhao2023beyond}
Zhao, Z., Wang, B., Ouyang, L., Dong, X., Wang, J., and He, C.
\newblock Beyond hallucinations: Enhancing lvlms through hallucination-aware direct preference optimization.
\newblock \emph{CoRR}, 2023.

\bibitem[Zhou et~al.()Zhou, Cui, Rafailov, Finn, and Yao]{zhou2024aligning}
Zhou, Y., Cui, C., Rafailov, R., Finn, C., and Yao, H.
\newblock Aligning modalities in vision large language models via preference fine-tuning.
\newblock In \emph{ICLR 2024 Workshop on Reliable and Responsible Foundation Models}.

\end{thebibliography}
\bibliographystyle{icml2026}

\newpage
\appendix
\onecolumn

\begin{algorithm}[t]
\caption{IRIS: Implicit Reward-Guided Internal Sifting (appendix pseudocode)}
\label{alg:iris_appendix}
\begin{algorithmic}[1]
\REQUIRE $\mathcal{D}_{\mathrm{SFT}}=\{(v,x,y^\star)\}$, base policy $\pi_{\theta_{\mathrm{base}}}$, rounds $R$, candidates $K$, rectifier $\gamma$, loss weights $(\beta,\lambda,\delta)$
\ENSURE aligned policy $\pi_{\theta_R}$

\STATE \textbf{Warm-up (calibration).}
\STATE $\pi_{\theta_0}\leftarrow \mathrm{SFT}\!\left(\pi_{\theta_{\mathrm{base}}},\mathcal{D}_{\mathrm{SFT}}\right)$

\FOR{$r=1$ \textbf{to} $R$}
 
  \STATE \textbf{(A) On-policy preference data construction.}
  \STATE Scoring reference:
  $\pi_{\mathrm{ref}}^{(r-1)} \leftarrow
  \begin{cases}
    \pi_{\theta_{\mathrm{base}}}, & r=1\\
    \pi_{\theta_{r-2}}, & r>1
  \end{cases}$
  \STATE Initialize preference set $\mathcal{D}^{(r)}\leftarrow \emptyset$

  \FOR{each $(v,x,y^\star)\in\mathcal{D}_{\mathrm{SFT}}$}
    \STATE Sample $K$ candidates $\{y^{(k)}\}_{k=1}^K \sim \pi_{\theta_{r-1}}(\cdot\mid v,x)$
    \FOR{$k=1$ \textbf{to} $K$}
      \STATE Compute implicit rewards $r_{\mathrm{image}}^{(r)}(v,x,y^{(k)})$ and $r_{\mathrm{text}}^{(r)}(x,y^{(k)})$
      \hfill (Eqs.~\ref{eq:r_image},~\ref{eq:r_text})
      \STATE RVG score:
      \[
      S^{(r)}(v,x,y^{(k)}) \leftarrow
      r_{\mathrm{image}}^{(r)} - \gamma\max\!\left(0,\, r_{\mathrm{text}}^{(r)}-r_{\mathrm{image}}^{(r)}\right)
      \]
      \hfill (Eq.~\ref{eq:rvg})
    \ENDFOR
    \STATE Select extrema:
    $y_w\leftarrow \arg\max_k S^{(r)}(v,x,y^{(k)})$,
    $y_l\leftarrow \arg\min_k S^{(r)}(v,x,y^{(k)})$
    \hfill (Eq.~\ref{eq:sifting})
    \STATE Filter low-confidence pairs and anchor with $y^\star$ if needed (Sec.~\ref{sec:method})
    \STATE Add $(v,x,y_w,y_l)$ to $\mathcal{D}^{(r)}$
  \ENDFOR

  \STATE \textbf{(B) Grounded preference learning.}
  \STATE Optimization reference: freeze $\pi_{\mathrm{ref}}\leftarrow \pi_{\theta_{r-1}}$ and initialize $\pi_\theta\leftarrow \pi_{\theta_{r-1}}$
  \STATE For each $(v,x,y_w,y_l)\in\mathcal{D}^{(r)}$, form negative image $\tilde v \leftarrow T(v)$ (App.~\ref{sec:additional_impl})
  \STATE Update $\pi_\theta$ by minimizing
  $\mathcal{L}_{\mathrm{total}}
  = \mathcal{L}_{\mathrm{ctp}} + \lambda\mathcal{L}_{\mathrm{cvp}} + \mathcal{L}_{\mathrm{anchor}}$
  \hfill (Eq.~\ref{eq:l_total})
  \STATE Set $\pi_{\theta_r}\leftarrow \pi_\theta$
\ENDFOR

\STATE \textbf{return} $\pi_{\theta_R}$
\end{algorithmic}
\end{algorithm}

\section{Detailed Computational Cost Analysis}
\label{app:computational_cost}

This section reports the wall-clock cost of \texttt{IRIS} and clarifies the primary sources of its efficiency. 
The key advantage of \texttt{IRIS} is its \textbf{lightweight sampling-and-sifting pipeline}. We construct preference pairs using only intrinsic log-likelihood signals from the policy, completely bypassing external evaluators or complex multi-stage verification.

\paragraph{Context: external-feedback pipelines.} 
A major cost driver in external-feedback methods is the scoring stage. 
For instance, \citet{hesystematic} report that generating and scoring a 22k preference dataset for RLAIF-V takes approximately \textbf{66 hours} on 8$\times$NVIDIA A100 GPUs. 
The bottleneck stems from RLAIF-V's "Divide-and-Conquer" strategy, which requires decomposing responses into multiple claims and conducting repeated QA-based inference with a large labeler model (e.g., 34B) to verify each claim. 
We include this figure as context, noting that while the dataset scale and hardware differ, it represents the typical overhead of prompt-based external feedback.

\paragraph{IRIS data curation cost.} 
\texttt{IRIS} eliminates this dependency by computing scores directly in the model's native log-probability space during or immediately after the sampling process. 
The additional overhead is minimal, as it only requires evaluating log-probabilities for a small set of $K=5$ candidates. 
On a single node with 8$\times$NVIDIA H20 GPUs, curating our 5.7k on-policy dataset takes \textbf{1.5 hours} in total (\textbf{1.0} hour for on-policy sampling and \textbf{0.5} hour for implicit-reward sifting). 

\paragraph{Normalized view of curation cost.} 
To better reflect the pipeline-level difference, Table~\ref{tab:cost_breakdown} reports the curation time normalized by dataset size (hours per 1k prompts). 
Despite using H20 GPUs, \texttt{IRIS} achieves a significantly lower normalized cost (0.26h vs. 3.00h). This gap (11.5× in Time/1k) suggests that the dominant cost difference comes from the pipeline design—most notably, avoiding labeler-model inference—though the two numbers are measured under different hardware and implementations.

\begin{table}[!ht]
    \caption{Curation-time comparison (context vs.\ \texttt{IRIS}). ``Time/1k'' normalizes wall-clock time by dataset size ($T_{total} / \text{Size} \times 1000$). Numbers for RLAIF-V are taken from \citet{hesystematic}; \texttt{IRIS} times are measured in our implementation.}
    \label{tab:cost_breakdown}
    \vskip 0.1in
    \centering
    \small
    \setlength{\tabcolsep}{8pt} 
    \renewcommand{\arraystretch}{1.12}
    \begin{tabular}{l c c c}
        \toprule
        Method & Dataset size & Total curation time (h) & Time/1k (h) \\
        \midrule
        RLAIF-V (reported) & 22k & 66.0 & 3.00 \\
        \texttt{IRIS} (ours) & 5.7k & 1.5 & 0.26 \\
        \bottomrule
    \end{tabular}
    \vskip -0.08in
\end{table}

\paragraph{End-to-end turnaround time.} 
The wall-clock time for one full \texttt{IRIS} round (generation $\rightarrow$ scoring $\rightarrow$ optimization) is:
\begin{itemize}[noitemsep]
    \item \textbf{On-policy sampling:} 1.0 hour (8$\times$H20)
    \item \textbf{Implicit-reward sifting:} 0.5 hour (8$\times$H20)
    \item \textbf{DPO training:} 1.0 hour (8$\times$H20)
\end{itemize}
The entire cycle completes in \textbf{2.5 hours}. This rapid turnaround allows for efficient iterative alignment, a key feature that distinguishes our approach from more computationally intensive self-alignment frameworks.

\section{Additional Implementation Details}
\label{sec:additional_impl}

\paragraph{Rejected image construction.}
Following prior work~\citep{fu2025chip,liu2025mitigating}, we construct rejected images $\tilde v$ by perturbing the original image $v$ to serve as negative visual inputs in Eq.~\ref{eq:l_cvp}. 
We consider four augmentation strategies: \textbf{Black} (all-zero image), \textbf{Random} (replace $v$ with a randomly sampled image from the training set), \textbf{Crop} (random crop followed by resizing back to the original resolution), and \textbf{Diffusion}. 
For diffusion, we apply the DDPM forward noising process with a total horizon of $T{=}1000$ and a fixed timestep $t{=}500$:
\[
x_t=\sqrt{\bar{\alpha}_t}\,x_0+\sqrt{1-\bar{\alpha}_t}\,\epsilon,\ \epsilon\sim\mathcal{N}(0,I),
\]
and use $x_t$ as $\tilde v$ (forward noising only; no denoising model is used). 
Table~\ref{tab:augmentation} reports an ablation on AMBER; all results are based on the same base model (LLaVA-1.5-7B), where R1/R2 denote the first/second IRIS preference round.

\begin{table}[!h]
    \caption{Ablation on data augmentation strategies for constructing negative samples.}
    \label{tab:augmentation}
    \vskip 0.15in
    \begin{center}
    \begin{small}
    \begin{sc}
    \begin{tabular}{l l | c c c}
        \toprule
        \multirow{2}{*}{Measure} & \multirow{2}{*}{Round} & \multicolumn{3}{c}{AMBER} \\
         & & CHAIR$\downarrow$ & HalRate$\downarrow$ & Cog$\downarrow$ \\
     
        \midrule
        Black & R1 & 4.1 & 19.9 & 1.9 \\
        \rowcolor{rowgray}Black & R2 & 3.9 & 19.3 & 1.9 \\
        \midrule
        Random & R1 & 4.1 & 20.4 & 1.9 \\
        \rowcolor{rowgray}Random & R2 & 4.4 & 21.6 & 1.9 \\
        \midrule
        Crop & R1 & 3.8 & 18.6 & 1.5 \\
        \rowcolor{rowgray}Crop & R2 & 3.2 & 15.7 & 1.0 \\
        \midrule
        Diffusion & R1 & 3.8 & 17.5 & 1.6  \\
       \rowcolor{rowgray} Diffusion  & R2 & \textbf{2.4} & \textbf{11.3} & \textbf{1.1} \\
        \bottomrule
    \end{tabular}
    \end{sc}
    \end{small}
    \end{center}
    \vskip -0.1in
\end{table}


\paragraph{Pair screening, length-aware filtering, and conflict anchoring.}
To improve the quality of on-policy preference supervision, we apply a lightweight post-processing pipeline before optimization.
We first score $K$ sampled candidates per prompt and form a raw preference pair by selecting the highest- and lowest-scoring candidates.
We then perform screening to remove unreliable or degenerate pairs, and apply a length-aware filter on descriptive prompts to reduce length bias.
Pairs flagged by the length filter are restored by replacing the preferred side with the corresponding SFT reference from $\mathcal{D}_{\text{SFT}}$.
Finally, for pairs whose preference direction clearly conflicts with the SFT reference, we conservatively anchor them to the SFT ordering.
All steps are internal to the training pipeline and require no external evaluator.

\begin{algorithm}[!h]
\caption{Post-processing for on-policy preference pairs}
\label{alg:pair_postprocess}
\small
\begin{algorithmic}[1]
\REQUIRE Prompts $(v,x)$, SFT references $\mathcal{D}_{\text{SFT}}$, repeat $K$, scoring function $r(v,x,y)$
\ENSURE Final preference pairs $\mathcal{P}$

\STATE $\mathcal{P}\leftarrow\emptyset$
\FOR{each prompt $(v,x)$}
    \STATE Sample $K$ candidates $\{y_j\}_{j=1}^{K}\sim\pi_{\theta}(\cdot\mid v,x)$ and compute scores $r_j=r(v,x,y_j)$
    \STATE $y_w \leftarrow\arg\max_j r_j,\;\; y_l \leftarrow\arg\min_j r_j$
    \STATE $y_{\text{sft}} \leftarrow \mathcal{D}_{\text{SFT}}(v,x)$ \COMMENT{if available}

    \COMMENT{(i) Screening: remove unreliable/degenerate pairs}
    \IF{$\text{norm}(y_w)=\text{norm}(y_l)$ \OR $\text{invalid}(y_w,y_l)$}
        \STATE \textbf{continue}
    \ENDIF

    \COMMENT{(ii) Length-aware filtering: reduce length bias on descriptive prompts}
    \IF{$\text{LenFilter}(v,x,y_w,y_l)$ and $y_{\text{sft}}$ exists}
        \STATE $y_w \leftarrow y_{\text{sft}}$ \COMMENT{restore preferred side}
    \ENDIF

    \COMMENT{(iii) Conflict anchoring: enforce preference direction consistency}
    \IF{$y_{\text{sft}}$ exists and $\text{Conflict}(y_w,y_l,y_{\text{sft}})$}
        \STATE $y_w \leftarrow y_{\text{sft}}$
    \ENDIF

    \STATE $\mathcal{P}\leftarrow\mathcal{P}\cup\{(v,x,y_w,y_l)\}$
\ENDFOR
\STATE \textbf{Return} $\mathcal{P}$
\end{algorithmic}
\end{algorithm}

\section{Impact on General Capabilities and Training Dynamics}
\label{sec:general_capability_analysis}

\paragraph{Effect of anchored regularization on general capability.}
Table~\ref{tab:general_capability_ablation} reports general instruction-following performance on LLaVA-Bench (in-the-Wild).
Compared to the base model, removing the anchored regularization leads to a noticeable drop in overall accuracy, indicating degraded general capability, even though hallucination-related metrics improve.
This behavior is expected for preference-based optimization, which primarily enforces relative ranking between responses and may reduce the absolute likelihood of preferred outputs.
The anchored regularization mitigates this issue by constraining the reference-relative reward of preferred responses, thereby stabilizing training and preserving broad instruction-following ability while optimizing for hallucination mitigation.

\begin{table}[!ht]
    \caption{\textbf{Effect of anchored regularization on general capability on LLaVA-Bench (in-the-Wild) (Accuracy \%).}
    All results are evaluated after Round 2. Higher is better.}
    \label{tab:general_capability_ablation}
    \vskip 0.10in
    \centering
    \small
    \setlength{\tabcolsep}{7pt}
    \renewcommand{\arraystretch}{1.15}
    \begin{sc}
    \begin{tabular}{l c c c c}
        \toprule
        Method & Overall$\uparrow$ & Complex$\uparrow$ & Conv$\uparrow$ & Detail$\uparrow$ \\
        \midrule
        Base model & 55.7 & \textbf{64.8} & 50.3 & 46.2 \\
        w/o Anchor & 52.3 & 48.5 & \textbf{62.2} & 47.2 \\
        \rowcolor{rowgray}
        Ours (Full) & \textbf{56.4} & 57.5 & 60.6 & \textbf{49.2} \\
        \bottomrule
    \end{tabular}
    \end{sc}
    \vskip -0.10in
\end{table}

\begin{table}[ht!]
    \caption{\textbf{LLaVA-1.5-7B: LLaVA-Bench (in-the-Wild) accuracy across rounds (Accuracy \%).}
    Higher is better.}
    \label{tab:llava_bench_7b}
    \vskip 0.10in
    \centering
    \small
    \setlength{\tabcolsep}{7pt}
    \renewcommand{\arraystretch}{1.15}
    \begin{sc}
    \resizebox{0.5\linewidth}{!}{
    \begin{tabular}{l c c c c}
        \toprule
        Stage & Overall$\uparrow$ & Complex$\uparrow$ & Conv$\uparrow$ & Detail$\uparrow$ \\
        \midrule
        Base model & 55.7 & 64.8 & 50.3 & 46.2 \\
        Round 1 & 56.3 & 56.8 & 60.4 & 50.4 \\
        Round 2 & 56.4 & 57.5 & 60.6 & 49.2 \\
        Round 3 & 53.2 & 54.5 & 57.6 & 45.9 \\
        \bottomrule
    \end{tabular}
    }
    \end{sc}
    \vskip -0.10in
\end{table}

\begin{table}[ht!]
    \caption{\textbf{LLaVA-1.5-13B: LLaVA-Bench (in-the-Wild) accuracy across rounds (Accuracy \%).}
    Higher is better.}
    \label{tab:llava_bench_13b}
    \vskip 0.10in
    \centering
    \small
    \setlength{\tabcolsep}{7pt}
    \renewcommand{\arraystretch}{1.15}
    \begin{sc}
    \resizebox{0.5\linewidth}{!}{
    \begin{tabular}{l c c c c}
        \toprule
        Stage & Overall$\uparrow$ & Complex$\uparrow$ & Conv$\uparrow$ & Detail$\uparrow$ \\
        \midrule
        Base model & 64.9 & 72.1 & 64.5 & 52.8 \\
        Round 1 & 64.8 & 67.2 & 65.8 & 59.1 \\
        Round 2 & 65.9 & 66.2 & 74.1 & 55.8 \\
        Round 3 & 61.8 & 64.6 & 63.4 & 55.0 \\
        \bottomrule
    \end{tabular}
    }
    \end{sc}
    \vskip -0.10in
\end{table}

\begin{figure}[!ht]
    \centering
    \includegraphics[width=0.5\linewidth]{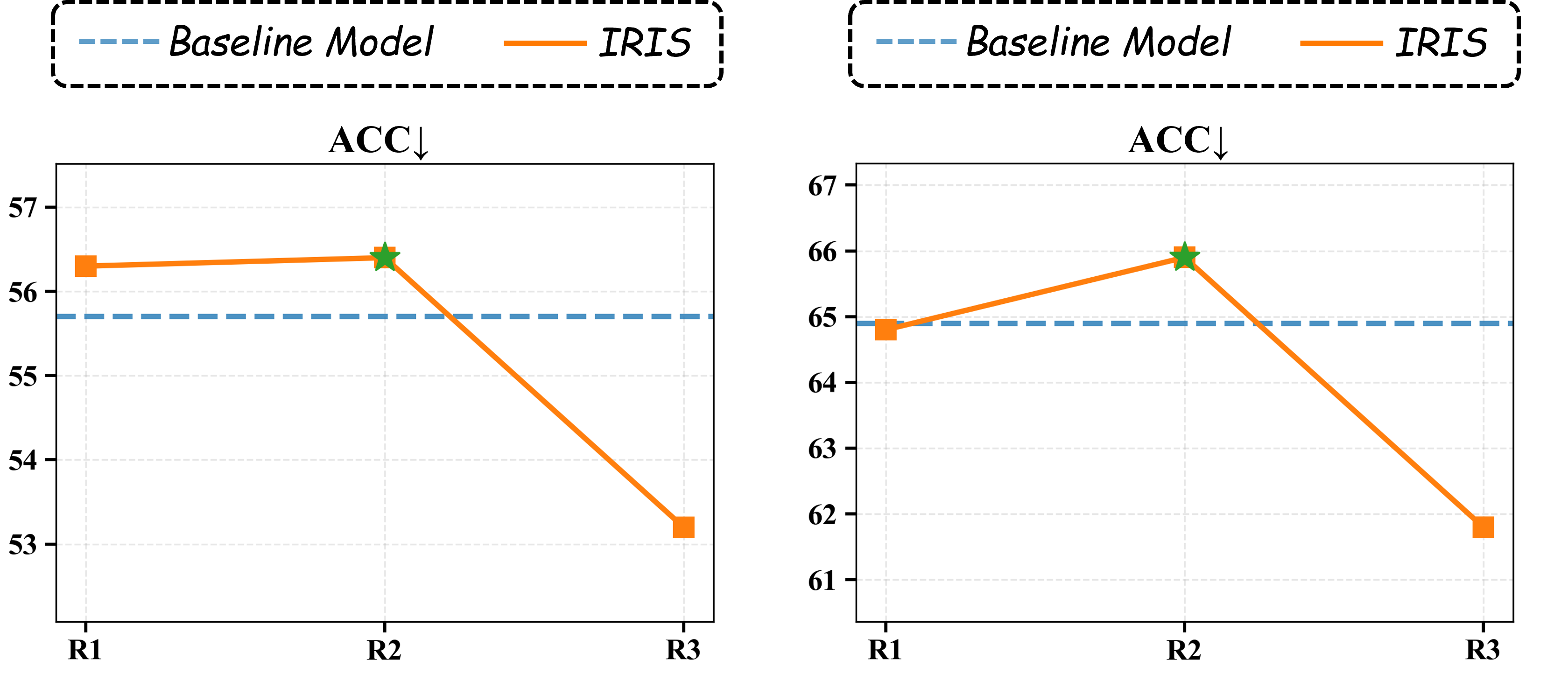}
    \caption{\textbf{Example of qualitative analysis for Round 3.} An example of the model's performance at the 3rd iteration in reducing descriptive illusions and maintaining visual consistency.}
    \label{fig:r3}
\end{figure}

\section{Hyperparameter Sensitivity Analysis}
\label{app:hparam_sensitivity}

\begin{table}[ht]
    \caption{Ablation study on $\lambda$. We focus on hallucination-related metrics. \textbf{Bold} indicates the best performance.}
    \label{tab:lambda_sensitivity}
    \vskip 0.15in
    \begin{center}
    \begin{small}
    \begin{sc}
    \resizebox{0.5\linewidth}{!}{
    \begin{tabular}{l | c c c | c c}
        \toprule
        \multirow{2}{*}{Setting} & \multicolumn{3}{c|}{AMBER} & \multicolumn{2}{c}{Object Hal} \\
         & CHAIR$\downarrow$ & HalRate$\downarrow$ & Cog$\downarrow$ & CHAIRs$\downarrow$ & CHAIRi$\downarrow$ \\
        \midrule
        $\lambda=0$   & 4.9 & 25.4 & 2.2 & 17.0 & 7.98 \\
        $\lambda=0.3$ & 3.8 & 19.7 & 2.0 & 18.6 & 9.41 \\
        $\lambda=0.5$ & 3.2 & 15.8 & 1.5 & 15.3 & 8.00 \\
        $\lambda=0.7$ & 3.0 & 14.4 & 1.4 & 11.6 & 5.92 \\
        \rowcolor{rowgray}
        $\lambda=1.0$ & \textbf{2.4} & \textbf{11.3} & \textbf{1.1} & \textbf{8.6} & \textbf{4.56} \\
        $\lambda=1.2$ & 2.8 & 12.9 & 1.4 & 9.6 & 5.02 \\
        \bottomrule
    \end{tabular}
    }
    \end{sc}
    \end{small}
    \end{center}
\end{table}

\begin{table}[htp]
    \caption{Sensitivity analysis of the rectification strength $\gamma$ in RVG.
    Results are reported for Round 1 and Round 2. Lower is better.}
    \label{tab:gamma_sensitivity}
  
    \begin{center}
    \begin{small}
    \begin{sc}
    \begin{tabular}{l c | c c c}
        \toprule
        Value & Round & CHAIR$\downarrow$ & Cog$\downarrow$ & HalRate$\downarrow$ \\
        \midrule
        0.0 & R1 & 3.8 & 1.8 & 17.8 \\
        \rowcolor{rowgray} 0.0 & R2 & 2.8 & 1.2 & 12.8 \\
        \midrule
        0.1 & R1 & 4.1 & 1.7 & 18.5 \\
        \rowcolor{rowgray} 0.1 & R2 & 2.7 & 1.4 & 12.5 \\
        \midrule
        0.3 & R1 & 4.1 & 1.8 & 19.1 \\
        \rowcolor{rowgray} 0.3 & R2 & 2.7 & 1.4 & 13.0 \\
        \midrule
        0.5 & R1 & 4.5 & 2.0 & 20.9 \\
        \rowcolor{rowgray} 0.5 & R2 & 2.7 & 1.3 & 12.7 \\
        \midrule
        0.7 & R1 & 3.8 & 1.6 & 17.5 \\
        \rowcolor{rowgray} 0.7 & R2 & \textbf{2.4} & \textbf{1.1} & \textbf{11.3} \\
        \midrule
        1.0 & R1 & 4.1 & 1.8 & 19.4 \\
        \rowcolor{rowgray} 1.0 & R2 & 3.0 & 1.6 & 14.4 \\
        \midrule
        1.5 & R1 & 4.3 & 1.8 & 20.3 \\
        \rowcolor{rowgray} 1.5 & R2 & 3.2 & 1.6 & 15.7 \\
        \midrule
        5.0 & R1 & 4.6 & 2.1 & 22.5 \\
        \rowcolor{rowgray} 5.0 & R2 & 2.9 & 1.4 & 14.2 \\
        \midrule
        20.0 & R1 & 4.5 & 2.1 & 21.8 \\
        \rowcolor{rowgray} 20.0 & R2 & 3.2 & 1.6 & 16.8 \\
        \bottomrule
    \end{tabular}
    \end{sc}
    \end{small}
    \end{center}
    \vskip -0in
\end{table}



\newpage

\section{Theoretical Analysis: Learning from Self-Generated Preferences}
\label{app:delta_learning}

This appendix provides a theoretical analysis of why IRIS can learn from noisy self-generated preference pairs.
We (i) derive a gradient difference form for a standard pairwise loss, (ii) show that each gradient step locally increases the log-likelihood margin on the constructed pair, and (iii) argue that selecting the best and worst among $K$ candidates enlarges the expected true-quality gap, which strengthens the delta-learning premise~\citep{geng2025delta}.

\subsection{Setup and Notation}
Let $c=(v,x)$ denote the multimodal context, and let $\pi_\theta(y\mid c)$ be the policy.
Assume an unobserved grounding quality function $s^*(c,y)\in\mathbb{R}$.
Given $K$ candidates $\mathcal{Y}_K=\{y^{(1)},\dots,y^{(K)}\}\sim \pi_\theta(\cdot\mid c)$, IRIS constructs a preference pair $(y_w,y_l)$ by selecting a high-score response as the winner and a low-score response as the loser (using the scoring rule in the main text).

\subsection{Pairwise Gradient Difference Form}
\begin{lemma}[Pairwise Gradient Difference Form]
\label{lem:pairwise_grad}
Consider the pairwise preference loss
\begin{equation}
\mathcal{L}_{\text{pref}}(c,y_w,y_l)
=
-\log \sigma\!\big(\Delta_\theta(c,y_w,y_l)\big),
\end{equation}
where
\begin{equation}
\Delta_\theta(c,y_w,y_l)
=
\beta\!\left(
\log\frac{\pi_\theta(y_w\mid c)}{\pi_{\text{ref}}(y_w\mid c)}
-
\log\frac{\pi_\theta(y_l\mid c)}{\pi_{\text{ref}}(y_l\mid c)}
\right),
\end{equation}
with $\beta>0$ and a fixed reference policy $\pi_{\text{ref}}$.
Then the gradient satisfies
\begin{equation}
\nabla_\theta \mathcal{L}_{\text{pref}}(c,y_w,y_l)
=
-\,w_\theta(c,y_w,y_l)\Big(
\nabla_\theta \log \pi_\theta(y_w\mid c)
-
\nabla_\theta \log \pi_\theta(y_l\mid c)
\Big),
\end{equation}
where $w_\theta(c,y_w,y_l)=\beta\,\sigma\!\big(-\Delta_\theta(c,y_w,y_l)\big)\in(0,\beta)$.
\end{lemma}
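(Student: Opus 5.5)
The proof is a direct chain-rule computation. I would apply the chain rule through the composition $\theta \mapsto \Delta_\theta \mapsto -\log\sigma(\Delta_\theta)$. The only scalar fact needed is the derivative of the log-sigmoid,
\[
\frac{d}{dz}\bigl(-\log\sigma(z)\bigr) = -\frac{\sigma'(z)}{\sigma(z)} = -\bigl(1-\sigma(z)\bigr) = -\sigma(-z).
\]
It follows from $\sigma'(z)=\sigma(z)(1-\sigma(z))$ together with the symmetry $1-\sigma(z)=\sigma(-z)$. This gives $\nabla_\theta \mathcal{L}_{\text{pref}} = -\sigma(-\Delta_\theta)\,\nabla_\theta \Delta_\theta$.

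Next I would compute $\nabla_\theta \Delta_\theta$. The reference policy $\pi_{\text{ref}}$ is fixed and does not depend on $\theta$, so the terms $\log\pi_{\text{ref}}(y_w\mid c)$ and $\log\pi_{\text{ref}}(y_l\mid c)$ have zero gradient. This leaves
\[
\nabla_\theta\Delta_\theta = \beta\bigl(\nabla_\theta\log\pi_\theta(y_w\mid c) - \nabla_\theta\log\pi_\theta(y_l\mid c)\bigr).
\]
Substituting into the previous expression and absorbing $\beta$ into the scalar, I define $w_\theta = \beta\,\sigma(-\Delta_\theta)$. This yields exactly the claimed difference form. The length-normalized log-likelihood used in the main text is still a differentiable function of $\theta$, so the argument is unchanged under that convention.

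Finally, the range claim holds because $\sigma$ maps $\mathbb{R}$ into the open interval $(0,1)$ and $\beta>0$, so $w_\theta\in(0,\beta)$ whenever $\Delta_\theta$ is finite. The only point needing care is a standing assumption: $\pi_\theta(y\mid c)>0$ and $\pi_{\text{ref}}(y\mid c)>0$ for the pair, and $\pi_\theta$ is differentiable in $\theta$, so that the log-ratios are finite. For an autoregressive softmax policy both conditions hold automatically, and I would state them explicitly at the start. Beyond that there is no real obstacle.
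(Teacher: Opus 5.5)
Your proposal is correct and follows the paper's proof: the paper applies the chain rule to get $\nabla_\theta \mathcal{L}_{\text{pref}} = -\sigma(-\Delta_\theta)\nabla_\theta\Delta_\theta$ and then uses that $\pi_{\text{ref}}$ is fixed to reduce $\nabla_\theta\Delta_\theta$ to $\beta$ times the difference of score functions. You additionally spell out the log-sigmoid derivative, the range claim $w_\theta\in(0,\beta)$, and the positivity and differentiability assumptions, all of which the paper leaves implicit.
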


\begin{proof}
By the chain rule,
\(
\nabla_\theta \mathcal{L}_{\text{pref}}
=
-\sigma(-\Delta_\theta)\nabla_\theta \Delta_\theta.
\)
Since $\pi_{\text{ref}}$ is fixed,
\(
\nabla_\theta \Delta_\theta
=
\beta\big(\nabla_\theta\log\pi_\theta(y_w\mid c)-\nabla_\theta\log\pi_\theta(y_l\mid c)\big),
\)
which yields the claim.
\end{proof}

\subsection{Margin Improvement and a Delta-Learning Premise}
\begin{assumption}[Average Directional Correctness]
\label{ass:avg_correct}
The constructed preference pairs satisfy a positive expected quality gap:
\begin{equation}
\mathbb{E}\big[s^*(c,y_w)-s^*(c,y_l)\big]\ge \delta,
\qquad \delta>0,
\end{equation}
where the expectation is over contexts and the randomness in sampling and pair construction.
\end{assumption}

\begin{proposition}[Local Margin Improvement]
\label{prop:margin_improve}
Define the log-likelihood margin of the constructed pair as
\begin{equation}
m_\theta(c)=\log\pi_\theta(y_w\mid c)-\log\pi_\theta(y_l\mid c).
\end{equation}
For the update $\theta'=\theta-\eta\nabla_\theta \mathcal{L}_{\text{pref}}$ with sufficiently small $\eta>0$,
\begin{equation}
m_{\theta'}(c)
=
m_\theta(c)
+
\eta\,w_\theta(c,y_w,y_l)\,
\big\|\nabla_\theta m_\theta(c)\big\|^2
+o(\eta),
\end{equation}
and therefore $m_{\theta'}(c)\ge m_\theta(c)$ whenever $\nabla_\theta m_\theta(c)\neq 0$.
\end{proposition}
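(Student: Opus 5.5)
The plan is a first-order Taylor expansion of $m_\theta(c)$ along the gradient step, with Lemma~\ref{lem:pairwise_grad} supplying the step direction. Since $\pi_{\text{ref}}$ is fixed, the definition of $\Delta_\theta$ in Lemma~\ref{lem:pairwise_grad} gives $\Delta_\theta(c,y_w,y_l)=\beta\, m_\theta(c)+\text{const}$, where the constant does not depend on $\theta$. Hence
\[
\nabla_\theta m_\theta(c)=\nabla_\theta\log\pi_\theta(y_w\mid c)-\nabla_\theta\log\pi_\theta(y_l\mid c).
\]
Lemma~\ref{lem:pairwise_grad} then reads $\nabla_\theta\mathcal{L}_{\text{pref}}=-w_\theta\,\nabla_\theta m_\theta(c)$, so the update is
\[
\theta'=\theta+\eta\,w_\theta(c,y_w,y_l)\,\nabla_\theta m_\theta(c).
\]
The pair $(y_w,y_l)$ is held fixed throughout: it was constructed before the update and does not depend on $\theta'$.

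Next, we assume $\theta\mapsto\log\pi_\theta(y\mid c)$ is continuously differentiable for the two fixed sequences, which holds for softmax-parameterized networks with smooth activations. A first-order Taylor expansion around $\theta$ gives
\[
m_{\theta'}(c)=m_\theta(c)+\langle\nabla_\theta m_\theta(c),\theta'-\theta\rangle+o(\|\theta'-\theta\|).
\]
Substituting the update, the inner product becomes $\eta\,w_\theta\|\nabla_\theta m_\theta(c)\|^2$. Because $w_\theta\le\beta$ and the gradient is evaluated at the fixed $\theta$, we have $\|\theta'-\theta\|=O(\eta)$, so the remainder is $o(\eta)$. This gives the displayed identity.

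For the inequality, Lemma~\ref{lem:pairwise_grad} gives $w_\theta\in(0,\beta)$, so $w_\theta>0$. Whenever $\nabla_\theta m_\theta(c)\neq0$, the first-order term $\eta\,w_\theta\|\nabla_\theta m_\theta(c)\|^2$ is strictly positive and linear in $\eta$. By the definition of $o(\eta)$, there exists $\eta_0>0$ such that for all $0<\eta<\eta_0$ the remainder is smaller in magnitude than half this term. Hence $m_{\theta'}(c)>m_\theta(c)$ for such $\eta$; this is the meaning of "sufficiently small."

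The only delicate point is the smoothness needed for the remainder to be $o(\eta)$. If a quantitative bound is wanted, one can instead assume an $L$-Lipschitz gradient of $m_\theta$ and obtain the explicit bound
\[
\eta<\frac{2w_\theta}{L\,w_\theta^2}=\frac{2}{L\,w_\theta}
\]
via the descent lemma. Otherwise the argument is routine.
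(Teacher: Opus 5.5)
Your proposal is correct and follows the same route as the paper. Lemma~\ref{lem:pairwise_grad} gives $-\nabla_\theta\mathcal{L}_{\text{pref}}=w_\theta\nabla_\theta m_\theta(c)$, and a first-order Taylor expansion along this step yields the identity. Your explicit smoothness assumption, the choice of $\eta_0$ for the strict inequality, and the Lipschitz-gradient threshold $\eta<2/(L\,w_\theta)$ are all sound; they make precise what the paper leaves implicit in ``sufficiently small $\eta$.''
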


\begin{proof}
From Lemma~\ref{lem:pairwise_grad},
\(
-\nabla_\theta \mathcal{L}_{\text{pref}}
=
w_\theta(c,y_w,y_l)\nabla_\theta m_\theta(c).
\)
A first-order Taylor expansion gives
\begin{equation}
\begin{aligned}
m_{\theta'}(c)
&=
m_\theta(c)
+\eta\left\langle \nabla_\theta m_\theta(c),\, w_\theta(c,y_w,y_l)\nabla_\theta m_\theta(c)\right\rangle
+o(\eta) \\
&=
m_\theta(c)
+\eta\,w_\theta(c,y_w,y_l)\,\big\|\nabla_\theta m_\theta(c)\big\|^2
+o(\eta),
\end{aligned}
\end{equation}
which yields the result.
\end{proof}

\noindent\textbf{Interpretation: Focusing on Violated Preferences.}
Proposition~\ref{prop:margin_improve} shows that a gradient step locally increases the log-likelihood margin on the constructed pair.
The weight $w_\theta=\beta\sigma(-\Delta_\theta)$ emphasizes \emph{violated} preferences:
when $\Delta_\theta<0$, the model ranks the loser above the winner under the implicit margin, and $w_\theta$ becomes large; when $\Delta_\theta>0$, the preference is already satisfied and $w_\theta$ becomes small.
Combined with Assumption~\ref{ass:avg_correct}, this implies that training concentrates updates on informative disagreements, while occasional construction errors do not dominate in expectation, consistent with the delta learning view~\citep{geng2025delta}.

\subsection{Signal Amplification via Best-of-$K$ Sifting}
\begin{proposition}[Extrema Selection Amplifies the Expected Quality Gap]
\label{prop:best_of_k}
Let $(y_w^{(K)},y_l^{(K)})$ be the winner/loser obtained by selecting the maximum/minimum score from $K$ i.i.d.\ samples $\mathcal{Y}_K\sim \pi_\theta(\cdot\mid c)$.
Assume the score is positively related to $s^*(c,y)$ in the sense that higher-score selections tend to have higher expected $s^*$.
Then the expected quality gap is non-decreasing in $K$:
\begin{equation}
\mathbb{E}\big[s^*(c,y_w^{(K)})-s^*(c,y_l^{(K)})\big]
\;\ge\;
\mathbb{E}\big[s^*(c,y_w^{(2)})-s^*(c,y_l^{(2)})\big].
\end{equation}
\end{proposition}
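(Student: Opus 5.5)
The argument reduces the claim to monotonicity of order statistics in $K$. Fix the context $c$ and let $S(y)$ be the sifting score, so that $y_w^{(K)}$ and $y_l^{(K)}$ are the arg-max and arg-min of $S$ over $\mathcal{Y}_K$. The vague hypothesis that ``higher-score selections tend to have higher expected $s^*$'' will be made precise as follows. Define $g(t)=\mathbb{E}[s^*(c,y)\mid S(y)=t]$ for $y\sim\pi_\theta(\cdot\mid c)$, and assume $g$ is non-decreasing. Because the samples are i.i.d., conditioning on the selected score gives
\[
\mathbb{E}[s^*(c,y_w^{(K)})]=\mathbb{E}[g(M_K)], \qquad \mathbb{E}[s^*(c,y_l^{(K)})]=\mathbb{E}[g(m_K)],
\]
where $M_K$ and $m_K$ are the maximum and minimum of $K$ i.i.d.\ copies of $S(y)$. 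Ties can be broken uniformly at random, and this does not affect the identity.

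The core step is a coupling. Realize the $K$-sample as the $2$-sample plus $K-2$ extra i.i.d.\ draws. Then $M_K\ge M_2$ and $m_K\le m_2$ hold pointwise. Since $g$ is non-decreasing, this gives $g(M_K)\ge g(M_2)$ and $g(m_K)\le g(m_2)$ pointwise. Taking expectations and subtracting yields the claimed inequality for fixed $c$. Averaging over $c$ then gives the statement, because everything is conditionally monotone. The same coupling, applied between $K$ and $K+1$, shows that the gap is non-decreasing in $K$ in general.

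The main obstacle is making the monotonicity assumption rigorous. ``Higher-score selections tend to have higher expected $s^*$'' must be formalized as monotonicity of $g$ (equivalently, $s^*$ stochastically increasing in $S$). Without such an assumption the claim can fail. Weaker readings, such as positive correlation alone, do not suffice, so I would state the monotone-$g$ version explicitly as the interpretation of the hypothesis.

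A secondary technical point arises with discrete or tied scores. There, conditioning on $S=t$ remains valid provided the tie-breaking among equal-score candidates is independent of $s^*$ given $S$. This is ensured by uniform random tie-breaking.
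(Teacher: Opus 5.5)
Your proof is correct. It rests on the same order-statistics fact as the paper's sketch, but it reaches it through a different reduction.

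\textbf{The paper's route.} The paper works with the order statistics of the true quality $U_i=s^*(c,y^{(i)})$ directly. In the ideal case where the score ranks candidates exactly as $s^*$ does, the selected pair is $(\max_i U_i,\min_i U_i)$, and the expected range grows with $K$. The imperfect-score case is then argued only informally, and against ``a random pair'' rather than the $K=2$ selection.

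\textbf{Your route.} You take order statistics of the score $S$ and transport them to $s^*$ via the identity $\mathbb{E}[s^*(c,y_w^{(K)})]=\mathbb{E}[g(M_K)]$. This identity follows from the tower property, because the pairs $(S_i,U_i)$ are i.i.d.\ and the selected index depends only on the scores and independent tie-breaking randomness. The nested-sample coupling and the monotonicity of $g$ then finish the argument.

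\textbf{What each buys.} The paper's ideal case is the special case where $s^*$ is a non-decreasing function of $S$, so your argument subsumes it. It also turns the imperfect-score case into an actual theorem and gives monotonicity for every consecutive pair $K,K+1$, not just $K$ versus $2$. The price is that you must commit to the monotone-regression reading of the hypothesis.

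Your remark that this reading cannot be weakened to positive correlation is right. Take $S$ on $\{0,1,2,3\}$ with probabilities $(0.3,0.3,0.3,0.1)$, and $s^*=g(S)$ with $g(2)=10$ and $g=0$ elsewhere. Then $\mathrm{Cov}(S,g(S))=2.4>0$, yet the gap equals $3$ at $K=2$ and tends to $0$ as $K\to\infty$. So the paper's informal ``positive relation'' phrasing is genuinely underspecified, and your formalization is what makes the proposition true.
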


\begin{proof}[Proof sketch]
Let $U_i=s^*(c,y^{(i)})$ be i.i.d.\ draws.
In the ideal case where the score preserves the ordering of $U_i$, the selected pair corresponds to $(\max_i U_i,\min_i U_i)$, and the expected range $\mathbb{E}[\max_i U_i-\min_i U_i]$ increases with $K$ by standard order-statistics.
With imperfect but positive relation, selecting extrema by the score still tends to choose a winner with larger $U$ and a loser with smaller $U$ than a random pair, preserving the monotonic trend in expectation.
\end{proof}

\newpage

\section{Qualitative Examples of Model Response}

To provide an intuitive understanding of the IRIS framework’s efficacy, we present qualitative examples from our evaluation benchmarks. These instances illustrate the trajectory of model improvement throughout the \textbf{iterative refinement rounds}, highlighting how the final model (\textbf{R2}) successfully rectifies hallucinations observed in baselines or earlier iterations. In the provided examples (e.g., Figure~\ref{fig:refinement_process}), \textcolor{red}{red text} denotes hallucinations or factual errors, while \textcolor{teal}{green text} indicates factually grounded statements.

\subsection{Visualization of the Preference Refinement Process}

\begin{figure}[h!]
    \centering
    \includegraphics[width=0.95\linewidth]{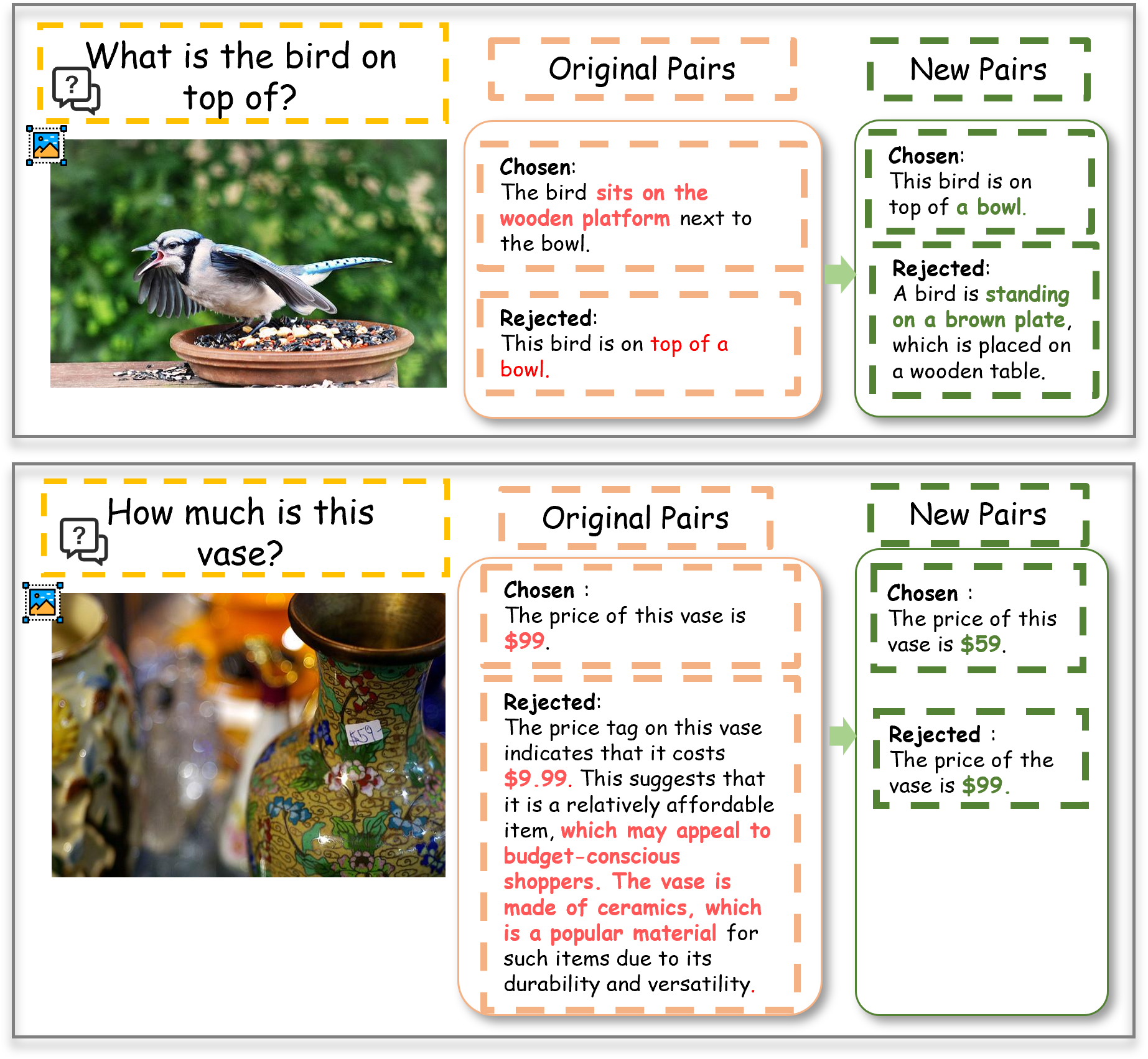}
    \caption{\textbf{Preference Pair Refinement (VQA Task).} Illustration of how IRIS sifts and refines preference pairs to mitigate object hallucination (e.g., bird location, vase price).}
    \label{fig:refinement_process} 
\end{figure}

\begin{figure}[h!]
    \centering
    \includegraphics[width=0.95\linewidth]{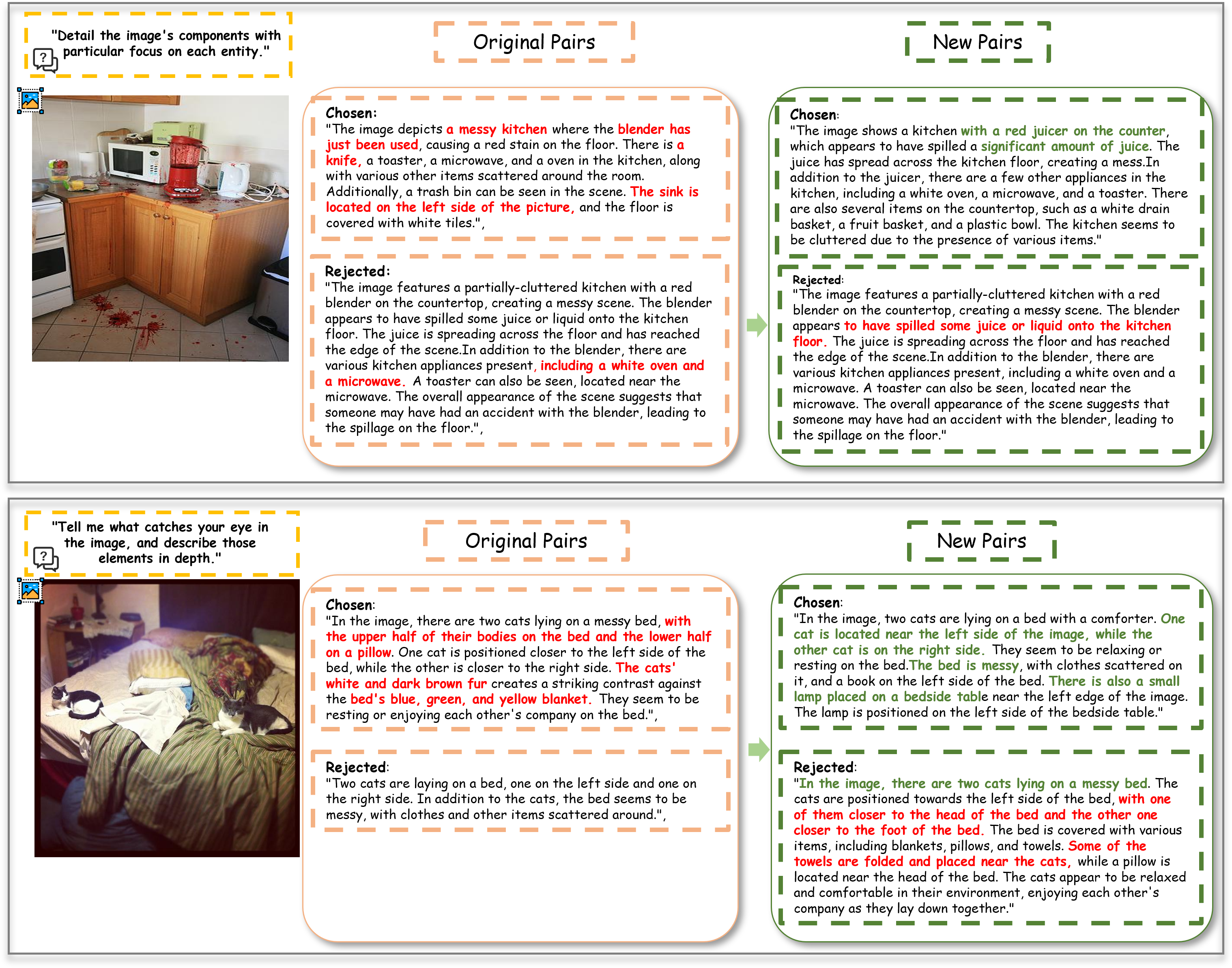}
    \caption{\textbf{Preference Pair Refinement (Description Task).} Demonstration of preference evolution for detailed image descriptions. The model learns to reject detailed but hallucinated descriptions in favor of visually grounded ones.}
    \label{fig:refinement_desc} 
\end{figure}

\newpage

\subsection{Qualitative Comparisons on Detail Description}

\begin{figure}[h!]
    \centering
    \includegraphics[width=0.95\linewidth]{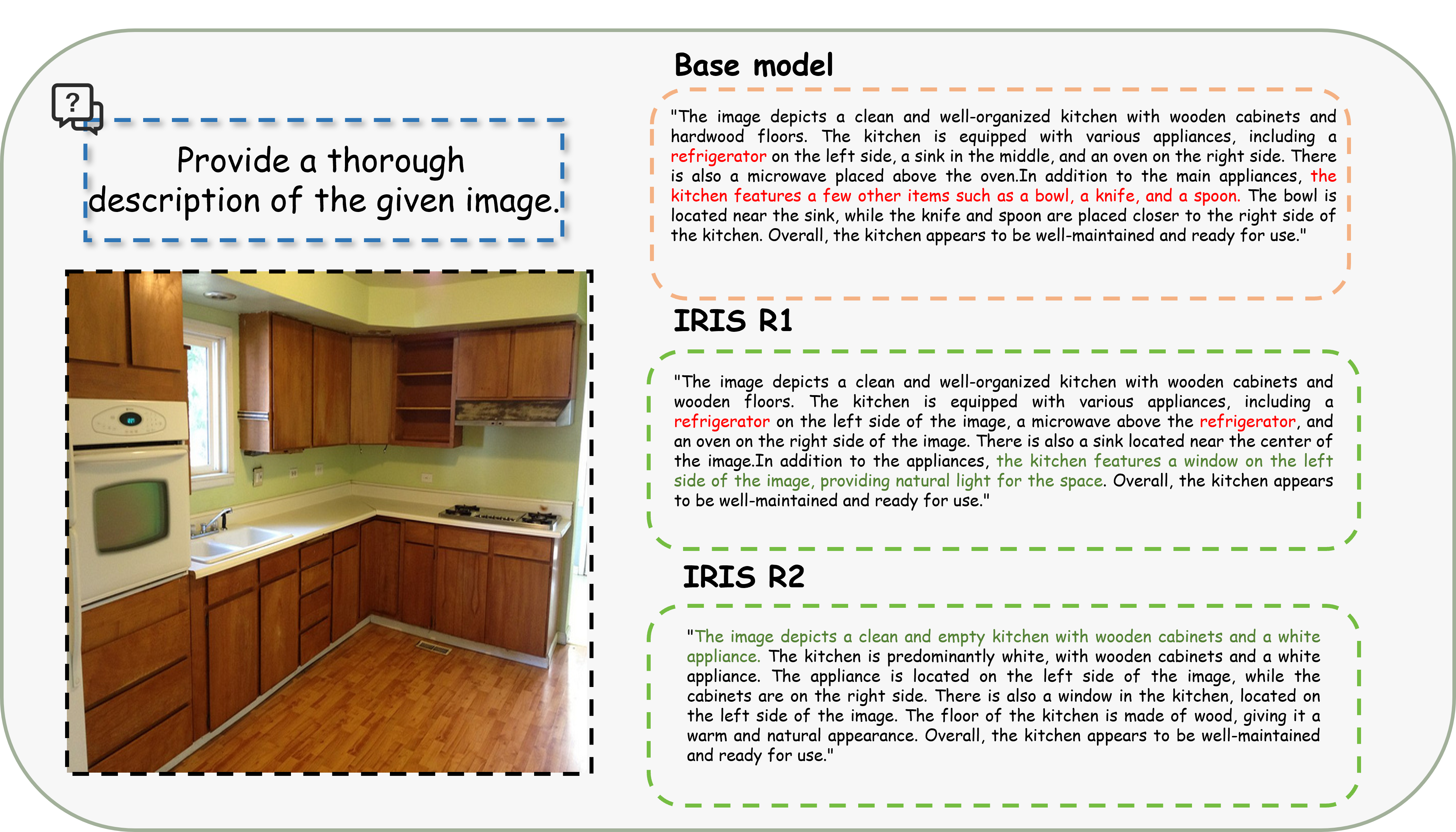}
    \caption{\textbf{Mitigating Descriptive Hallucination.} The initial base model hallucinates non-existent objects such as a bowl, knife, and spoon. As the training progresses, IRIS(R2) correctly identifies the empty counter and wooden cabinets, adhering strictly to visual evidence.}
    \label{fig:kitchen_desc} 
\end{figure}

\begin{figure}[h!]
    \centering
    \includegraphics[width=0.95\linewidth]{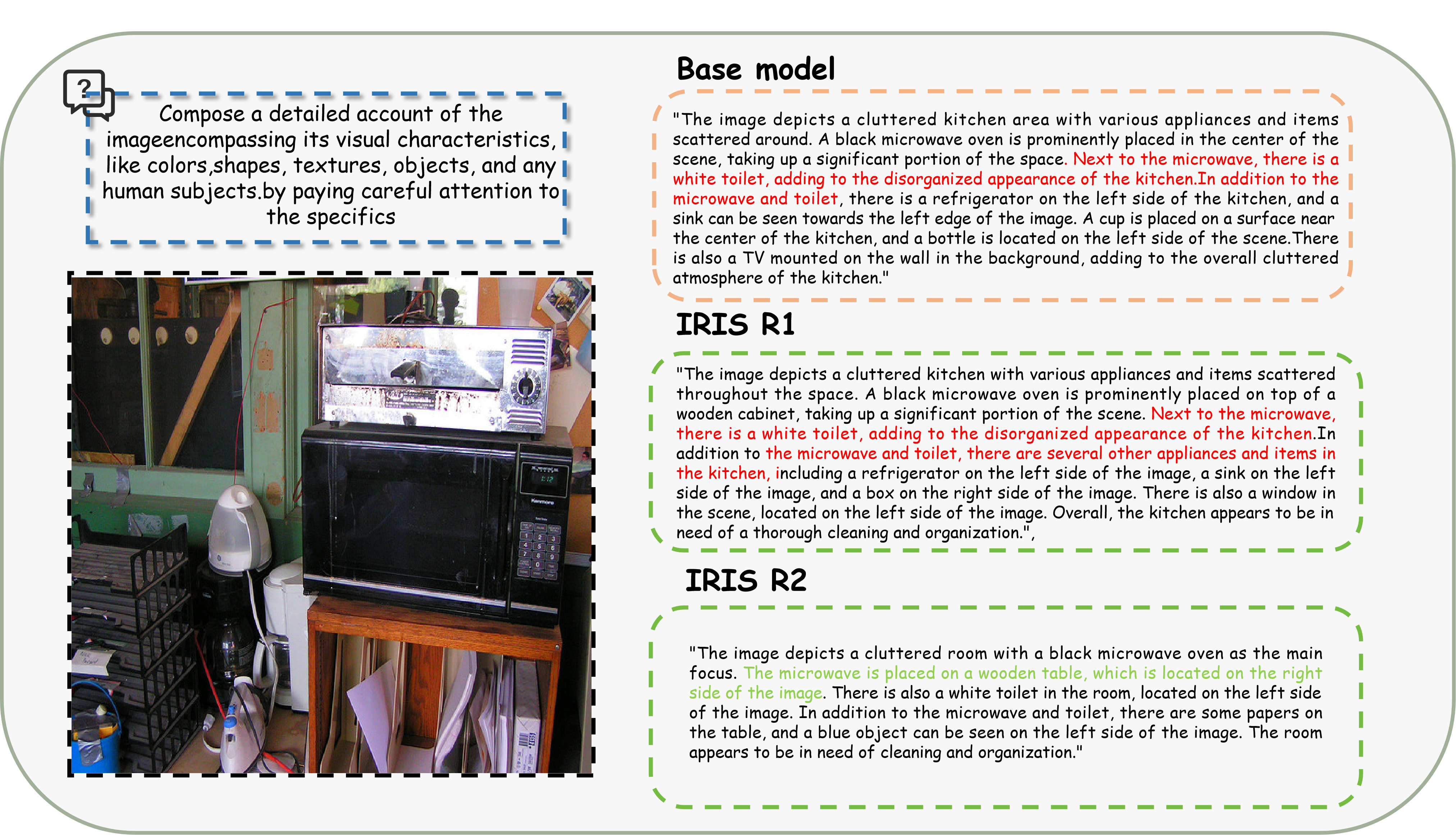}
    \caption{\textbf{Mitigation of Scene Completion Hallucination.} Unlike the base model, which erroneously infers a TV on the wall and a bottle on the left, IRIS(R2) avoids fabricating unseen background elements in the cluttered scene.}
    \label{fig:microwave_room} 
\end{figure}

\begin{figure}[h!]
    \centering
    \includegraphics[width=0.95\linewidth]{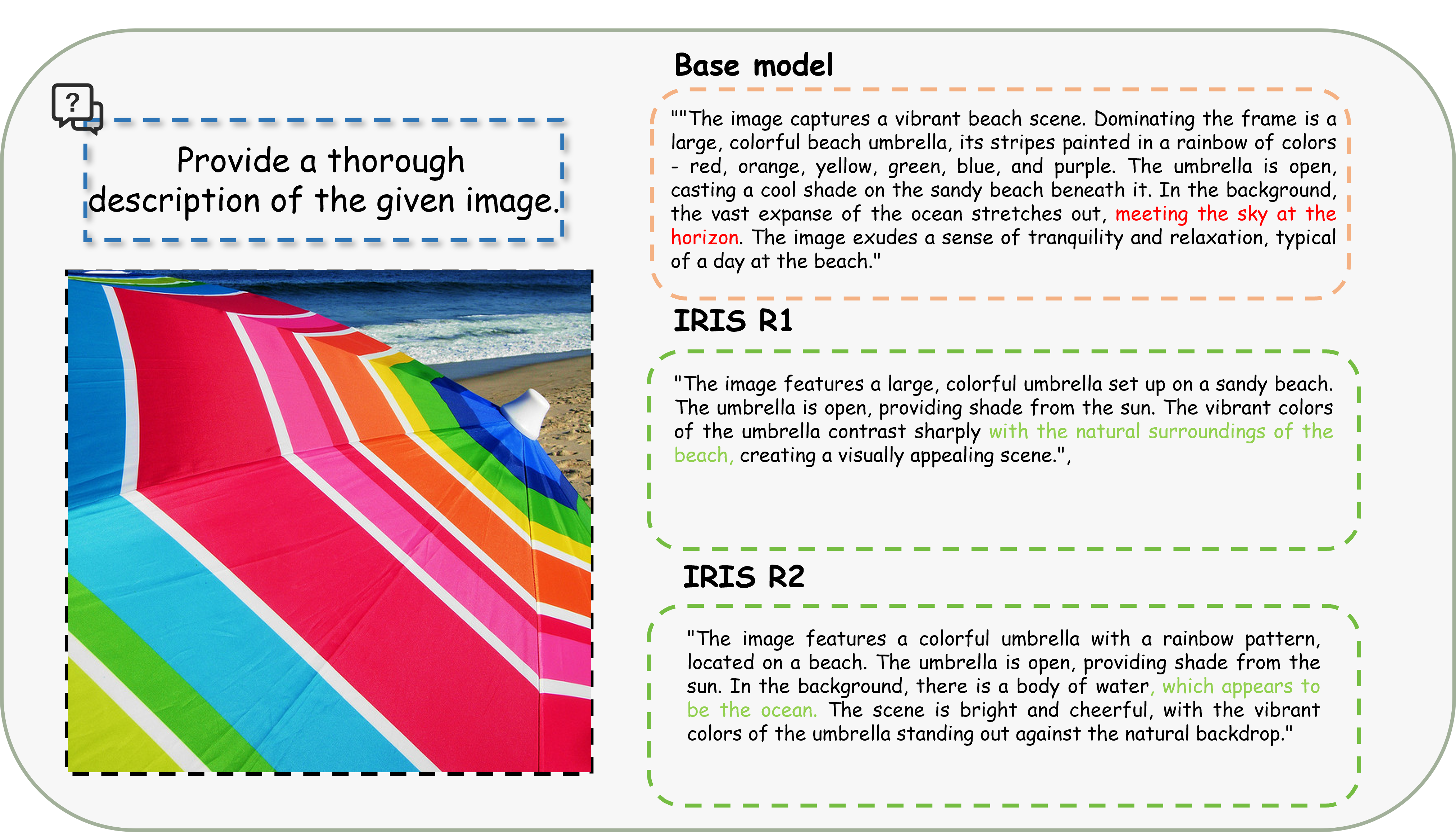}
    \caption{\textbf{Misidentification Correction in Complex Scenarios.} Illustrates how the model reduces incorrect identifications (e.g., mistaking the ocean/sky boundary) and unnecessary inferences in multi-object scenarios.}
    \label{fig:umbrella} 
\end{figure}

\begin{figure}[h!]
    \centering
    \includegraphics[width=0.95\linewidth]{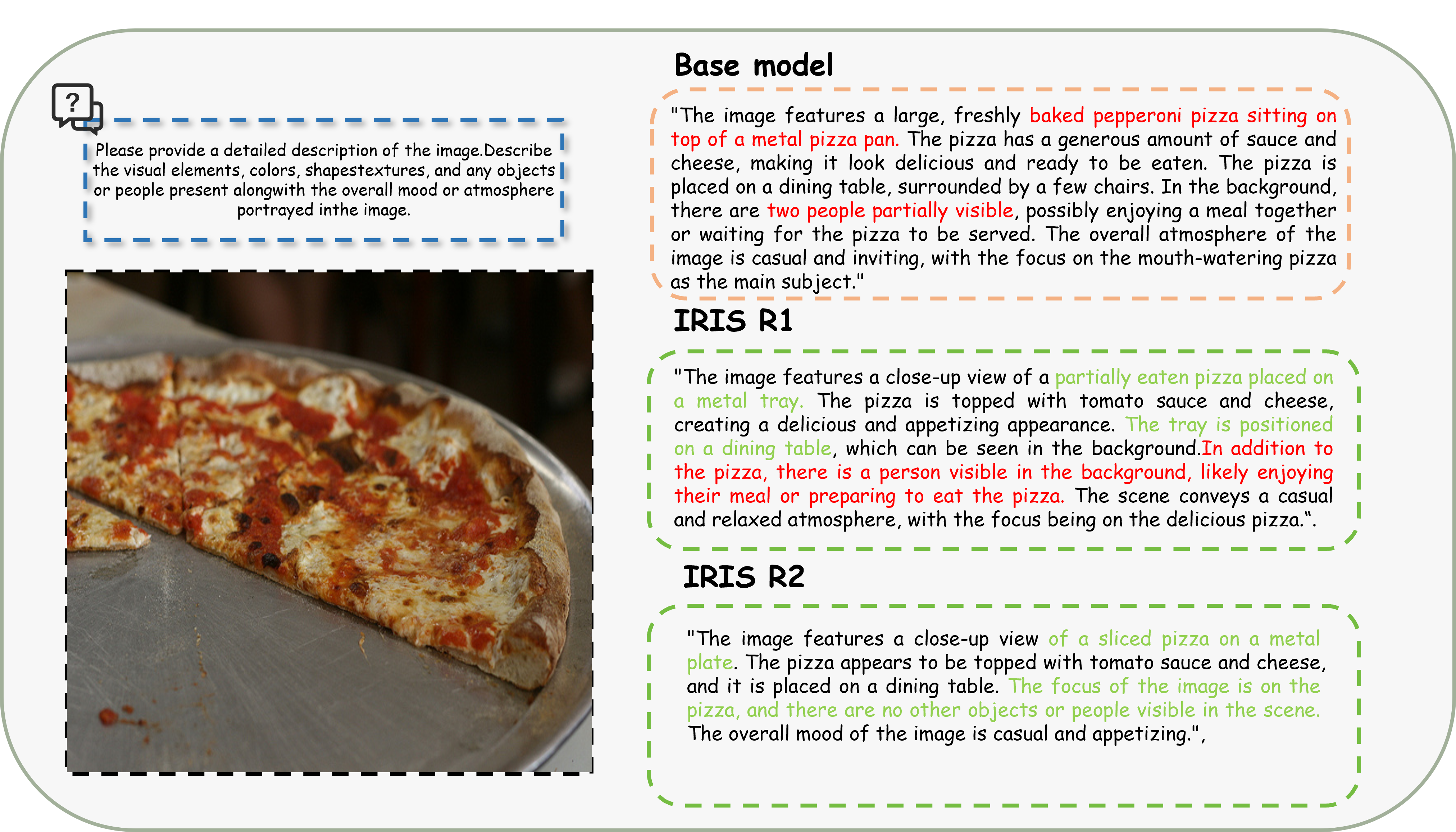}
    \caption{\textbf{Detail Preservation and Negative Responses.} This demonstrates the model's ability to choose conservative (non-false) responses when faced with uncertain details, correcting the hallucination of "two people partially visible" in the background.}
    \label{fig:pizza} 
\end{figure}

\end{document}